\documentclass[a4paper,fleqn]{cas-sc}

\usepackage[authoryear]{natbib}

\usepackage{lipsum}
\usepackage{amsfonts}
\usepackage{graphicx}
\usepackage{epstopdf}
\usepackage{algorithmic}
\usepackage{algorithm}
\usepackage[utf8x]{inputenc}
\usepackage{bm}
\usepackage{booktabs}
\usepackage{etoolbox}
\usepackage{todonotes}
\usepackage{siunitx}
\usepackage{amsopn}
\usepackage{xcolor}

\newcommand{\IR}{{\mathbb R}}
\newcommand{\R}{{\mathbb R}}

\newcommand{\beq}{\begin{equation}}
\newcommand{\eeq}{\end{equation}}

\newcommand{\Pb}{\mathbb P}

\newcommand{\E}{\mathbb{E}}

\newcommand{\hac}{\mathcal{H}}

\newcommand{\I}{{\mathbb I}}
\newcommand{\aac}{\mathcal{A}}

\newcommand{\jac}{\mathcal{J}}

\newtheorem{theorem}{Theorem}

\newdefinition{remark}{Remark}
\newdefinition{hypothesis}{Hypothesis}
\newdefinition{fact}{Fact}
\newproof{proof}{Proof}

\newdefinition{definition}{Definition}
\newdefinition{example}{Example}

\begin{document}

\let\WriteBookmarks\relax
\def\floatpagepagefraction{1}
\def\textpagefraction{.001}

\shorttitle{Parameter Estimation in SDEs via WCE and SGD}
\shortauthors{F. Delgado-Vences et~al.}

\title [mode = title]{Parameter Estimation in Stochastic Differential Equations via Wiener Chaos Expansion and Stochastic Gradient Descent}

\author[1]{Francisco Delgado-Vences}
\cormark[1]
\ead{FranciscoJavier.Delgado@uab.cat}
\affiliation[1]{organization={Universitat Autónoma de Barcelona, Departament de Matemàtiques, Edifici C, Facultat de Ciències},
                postcode={080193}, 
                city={Bellaterra}, 
                country={Spain}}

\author[2]{Jose Julian Pavon-Espa\~{n}ol}
\ead{julian.pavon2@ciencias.unam.mx}
\affiliation[2]{organization={Facultad de Ciencias - Universidad Nacional Aut\'onoma de M\'exico. Ciudad Universitaria},
                city={Ciudad de M\'exico}, 
                country={Mexico}}

\author[3]{Arelly Ornelas}
\ead{aornelasv@ipn.mx}
\affiliation[3]{organization={Secihti - Centro Interdisciplinario de Ciencias Marinas. IPN.},
                state={BCS},
                city={La Paz}, 
                country={México}}

\cortext[cor1]{Corresponding author}
\fntext[fn1]{FDV is supported by Ministerio de Ciencia e Innovación, Gobierno de España, project ref. PID2021-123733NB-I00.}

\begin{abstract}
This study addresses the inverse problem of parameter estimation for Stochastic Differential Equations (SDEs) by minimizing a regularized discrepancy functional via Stochastic Gradient Descent (SGD). To achieve computational efficiency, we leverage the Wiener Chaos Expansion (WCE), a spectral decomposition technique that projects the stochastic solution onto an orthogonal basis of Hermite polynomials. This transformation effectively maps the stochastic dynamics into a hierarchical system of deterministic functions, termed the \textit{propagator}. By reducing the stochastic inference task to a deterministic optimization problem, our framework circumvents the heavy computational burden and sampling requirements of traditional simulation-based methods like MCMC or MLE. The robustness and scalability of the proposed approach are demonstrated through numerical experiments on various non-linear SDEs, including models for individual biological growth. Results show that the WCE-SGD framework provides accurate parameter recovery even from discrete, noisy observations, offering a significant paradigm shift in the efficient modeling of complex stochastic systems.
\end{abstract}

\begin{keywords}
Diffusions \sep stochastic gradient descent \sep parameter estimation \sep Wiener chaos expansion \sep simulations
\end{keywords}

\maketitle

\section{Introduction}
Mathematical optimization is fundamental to applied mathematics, influencing areas like optimal control theory, inverse problems, and the training of modern deep neural networks. The main aim is usually described as the minimization of a continuously differentiable objective function $f: \mathbb{R}^d \to \mathbb{R}$. While some problems can be solved analytically, the overwhelming majority of practical scenarios demand iterative numerical approaches. Among these, first-order methods—specifically Gradient Descent (GD) and its stochastic counterpart, Stochastic Gradient Descent (SGD)— are very important due to their implementation simplicity and remarkable scalability in high-dimensional spaces. 

This research focuses on the intersection of these optimization techniques with the theory of Stochastic Differential Equations (SDEs). Specifically, we address the inverse problem of parameter estimation in SDEs by reformulating it as a deterministic optimization task via the Wiener Chaos Expansion.

\subsection{Deterministic Gradient Descent}
 The concept of gradient-based optimization originates from Cauchy's steepest descent method for simultaneous equations. In the context of unconstrained minimization, the Gradient Descent algorithm creates a sequence of iterates represented as $\{x_k\}_{k\geq0}$ through the specified update rule:
\begin{equation} \label{eq:gd_update}
    x_{k+1} = x_k - \eta_k \nabla f(x_k),
\end{equation}
where $\eta_k > 0$ denotes the step size (or learning rate). The theoretical convergence of this method is well-established; for convex functions with Lipschitz continuous gradients ($L$-smooth), GD converges at a rate of $O(1/k)$ (see \cite{Beck2017}, \cite{Nesterov1983}).

However, the efficiency of standard GD is limited by the condition number of the Hessian matrix. To mitigate this issue, Polyak introduced the heavy-ball method, or momentum, using the history of update vectors to accelerate convergence. Further theoretical breakthroughs were made by Nesterov, who proved that by employing a specific momentum schedule, one can achieve a convergence rate of $O(1/k^2)$, which is optimal for the class of first-order methods on smooth convex functions. Despite these advances, the computational cost of GD becomes prohibitive in large-scale settings where the objective function decomposes into a finite sum, $f(x) = \frac{1}{n} \sum_{i=1}^n f_i(x)$, requiring $n$ gradient evaluations per iteration.

\subsection{Stochastic Gradient Descent}
To tackle the computational challenges inherent in full-batch gradient calculation, \cite{Robbins1951} introduced the Stochastic Approximation method.  When applied to the optimization process, this concept results in Stochastic Gradient Descent (SGD). Rather than computing the exact gradient, SGD provides an estimate by utilizing a single sample (or a small mini-batch) $i_k$ chosen uniformly at random:
\begin{equation} \label{eq:sgd_update}
    x_{k+1} = x_k - \eta_k \nabla f_{i_k}(x_k).
\end{equation}
It is essential to note that the stochastic gradient serves as an unbiased estimator of the true gradient, satisfying $\mathbb{E}[\nabla f_{i_k}(x_k)] = \nabla f(x_k)$.

The convergence analysis of SGD differs fundamentally from GD due to the introduction of variance. While GD converges to the exact minimizer with a constant step size, SGD will eventually oscillate around the minimizer within a "noise ball" proportional to the variance of the gradients and the step size (\cite{Bottou2010}, \cite{Bottou2018}). To guarantee almost sure convergence to the optimum, the step size sequence must satisfy the Robbins-Monro conditions:
\begin{equation}
    \sum_{k=1}^{\infty} \eta_k = \infty \quad \text{and} \quad \sum_{k=1}^{\infty} \eta_k^2 < \infty.
\end{equation}
Under these conditions, SGD achieves a sublinear convergence rate of $O(1/k)$ for strongly convex functions. While asymptotically slower than GD, the per-iteration cost of SGD is independent of the dataset size $n$, rendering it superior for large-scale problems.

\subsection{Comparison with Variational Data Assimilation Methods}
\label{subsec:comparison_variational}

In the context of parameter estimation for Stochastic Differential Equations (SDEs), it is pertinent to contrast our proposed framework with widely adopted variational data assimilation methods, such as weak-constraint 4D-Var and stochastic optimal control \cite{tremolet2006accounting, reich2015probabilistic}. Variational approaches typically retain the stochastic nature of the system, formulating the inverse problem by treating the intrinsic process noise as an additional control variable. Under this paradigm, finding the optimal parameters and state trajectories requires minimizing a cost functional that balances the discrepancy with the observations and the magnitude of the model error. However, performing this optimization generally relies on the continuous adjoint method, which entails computationally demanding forward and backward time integrations of the stochastic dynamics \cite{stuart2010inverse}.

In contrast, the methodology presented in this study fundamentally circumvents the need for backward stochastic integration. By leveraging the Wiener Chaos Expansion (WCE), we project the stochastic solution onto a basis of Hermite polynomials, effectively determinizing the system. As previously described, the stochastic dynamics are decomposed into a hierarchical system of deterministic ordinary differential equations, referred to as the propagator. Consequently, the stochastic parameter learning task is rigorously reduced to a deterministic optimization problem. This spectral transformation allows us to compute gradients analytically and apply Stochastic Gradient Descent (SGD) directly on the propagator. By doing so, our framework avoids the severe computational burden and the complex non-convexity landscapes often associated with traditional simulation-based or adjoint-based variational methods, while ensuring robust convergence properties for the parameter estimation.

\subsection{The Wiener Chaos Expansion}

The Wiener Chaos Expansion (WCE) is a spectral decomposition technique on the probability space, originally introduced by Cameron and Martin (1947) and later developed for SDEs by Rozovskii and Lototsky and their colaborators among others. It provides an orthogonal basis for the space of square-integrable random variables using Hermite polynomials of Gaussian processes.

In the context of this work, the WCE serves a dual purpose. First, it allows for the rigorous existence and uniqueness analysis of solutions to Stochastic Partial Differential Equations (SPDEs). Second, and most important for our method, it provides a computational framework to decompose a stochastic process into a hierarchical system of deterministic differential equations, known as the \textit{propagator}. This transformation enables us to convert the stochastic inverse problem of parameter estimation into a deterministic optimization problem, solvable via gradient-based methods.

\subsection{Overview of the Proposed Methodology}
\label{subsec:overview_methodology}

While the preceding sections outline the foundational mathematical concepts, the primary contribution of this manuscript is the synthesis of these distinct tools into a unified, highly efficient framework for SDE parameter estimation. Specifically, we address the challenge of inferring the unknown drift and diffusion parameters, denoted by $\theta$, from discrete noisy measurements. This is a critical task when modeling highly non-linear dynamics, such as those governing individual biological growth.

Traditional inference frameworks for SDEs, such as Maximum Likelihood Estimation (MLE) and Markov Chain Monte Carlo (MCMC) methods, often rely on intensive Monte Carlo simulations to approximate the transition densities or the likelihood surface. These approaches become computationally prohibitive as the complexity of the system or the dimension of the parameter space increases. In contrast, our framework leverages the Wiener Chaos Expansion to transform the stochastic inverse problem into a hierarchical system of deterministic equations. By doing so, we eliminate the need for repetitive path simulations, providing a more efficient optimization landscape for Stochastic Gradient Descent (SGD) to operate on.

Instead of relying on standard maximum likelihood estimators or particle-based filters, our approach reformulates the problem through three fundamental steps:

\begin{enumerate}
    \item \textbf{Spectral Determinization:} Rather than dealing with random trajectories directly, we apply the Wiener Chaos Expansion to the underlying SDE. By projecting the stochastic process onto an orthogonal basis of Hermite polynomials, we truncate and transform the original stochastic system into a coupled, hierarchical system of deterministic Ordinary Differential Equations (ODEs). The resulting deterministic system is what we refer to as the \textit{propagator}.
    
    \item \textbf{Formulation of the Discrepancy Functional:} We cast the parameter inference as a deterministically driven inverse problem. We construct a Tikhonov-Phillips regularized cost functional that quantifies the error between the statistical properties of the experimental data (specifically, the empirical mean and variance) and the corresponding zeroth and higher-order states of our deterministic propagator. 
    
    \item \textbf{Optimization via SGD:} To minimize this functional and recover the optimal parameters $\theta$, we analytically compute the gradients directly from the propagator equations. We then deploy a Stochastic Gradient Descent algorithm to iteratively navigate the optimization landscape.
\end{enumerate}

By executing this sequence, we effectively translate a complex stochastic inference problem into a classical machine learning optimization task. This paradigm shift not only avoids the computational bottlenecks of Monte Carlo simulations but also leverages the robust convergence properties of gradient-based methods on deterministic, regularized loss landscapes.

The remainder of this manuscript is organized as follows. In Section \ref{sec:InvProb} we briefly review the inverse problem and in particular the ill-posed inverse problems. Section \ref{DO-noise} formalizes the theoretical framework for differential operators subjected to noise, establishing the stochastic inverse problem as a well-posed minimization task grounded in Tikhonov-Phillips regularization. Section \ref{examples} bridges this abstract theory with classical models, demonstrating the construction of the deterministic propagator and the corresponding objective functions for both SDEs and Stochastic Partial Differential Equations (SPDEs). Section \ref{sec:LearSDEs} details the core algorithmic contribution: it introduces a modified empirical loss function incorporating the quadratic variation of the data, explicitly computes the analytical gradients from the propagator, and discusses the convergence guarantees of the SGD algorithm utilizing the Barzilai-Borwein step-size method. Section \ref{sec:NumExp} validates the proposed framework through numerical experiments on a foundational toy model, the Ornstein-Uhlenbeck process. In Section \ref{sec:AppExamSDEs} we apply and validate the proposed method to three examples of SDEs. Finally, Section \ref{real-data-app} extends the methodology to more complex, non-linear SDEs with practical relevance, including Geometric Brownian Motion, and the stochastic logistic equation for biological growth.

\section{On inverse problems}\label{sec:InvProb}

\subsection{Inverse Problems and Regularization}
A prominent domain where these iterative methods have found fundamental application is the regularization of ill-posed inverse problems. In this setting, the objective is to recover an unknown parameter $x \in \mathcal{G}$ from noisy measurements $y^\delta \in \mathcal{H}$, modeled by the operator equation:
\begin{equation} \label{eq:inverse_prob}
    \mathcal{A}x = y^\delta ,
\end{equation}

where $\mathcal{A}: \mathcal{G} \to \mathcal{H}$ is a forward operator (potentially non-linear). Following Hadamard's definition, such problems are often \textit{ill-posed}, meaning a solution may not exist, may not be unique, or may not depend continuously on the data.

The classical approach to solving ill-posed problems involves minimizing a discrepancy functional, typically the least-squares term $\frac{1}{2}\|\mathcal{A}x - y^\delta\|^2$. To address instability, this minimization is often augmented with a regularization term. 
\begin{equation}\label{min_squares_regular}
 \,\Big( \frac{1}{2}\big\| y- Au \big\|_Y^2 + \lambda \big\| u - x_0 \big\|_E^2 \Big).
\end{equation}
The \textit{Tikhonov-Phillips regularization} searches for a solution $x_\delta$ that minimizes:
\begin{equation} \label{eq:tikhonov}
    J_\delta(x) = \|\mathcal{A}x - y^\delta\|_{\mathcal{H}}^2 + \delta \|x - x_0\|_{\mathcal{G}}^2,
\end{equation}
where $\delta > 0$ is the regularization parameter and $x_0$ is a prior estimate (often set to zero).

It is well-established that applying Gradient Descent to this functional yields the celebrated \textit{Landweber iteration} \cite{Landweber1951}, a method known to exhibit a self-regularizing property where the iteration count serves as the regularization parameter (semiconvergence) \cite{Engl1996}. However, for large-scale tomographic or seismic imaging problems where the operator $A$ consists of distinct row blocks representing partial data subsets, computing the full gradient is computationally prohibitive. Here, Stochastic Gradient Descent emerges naturally as the \textit{Randomized Kaczmarz method} (or Algebraic Reconstruction Technique) \cite{Strohmer2009}. By selecting random subsets of equations to update the estimate, SGD acts as an efficient iterative solver that drastically reduces the per-iteration computational cost while maintaining convergence guarantees suitable for large-scale inverse problems \cite{Needell2014}.

\subsection{Some statistical Ill-posed inverse problems}\label{S-IPIP}

In this subsection and for the sake of completeness, we will review some classical inverse problems. There is a high number of good references for the study of the inverse problems, here we refer to: \cite{lu2013regularization}, \cite{kabanikhin2011inverse},  \cite{engl1996regularization}, \cite{hasanouglu2021introduction}, \cite{isakov2006inverse} and the references therein.\\
Set two separable Hilbert spaces $\hac$ and $\mathcal{G}$, and  define an operator $A$ from  $\mathcal{G}$ into $\hac$, $A:\mathcal{G}\rightarrow \hac$. 

We consider the problem 
\begin{equation}\label{Gen_Inv-problem}
A x=y
\end{equation}

with $y\in \hac$ and $x\in \mathcal{G}$. Here, we are assuming that $A$ is a (not necessarily bounded) linear operator acting from a subset $X$ of $\mathcal{G}$ into a subset $Y$ of $\mathcal{H}$. The inverse problem focuses on finding $x \in X$ given $y$; sometimes $y$ is called the {\it data}.\\
\begin{remark}
In this work, we will be interested in the case where the operator  $A$ is a differential operator.
\end{remark}

The problem (\ref{Gen_Inv-problem}) is called {\it well-posed (in the sense of Hadamard)} on $\hac$ and $\mathcal{G}$
if the following three conditions hold:
\begin{enumerate}
\item for any $y \in \mathcal{H}$ there exists a solution $x_e \in \mathcal{G}$ to the equation (\ref{Gen_Inv-problem}), i.e., $Range (A)=\mathcal{H}$ (the existence condition);
\item the solution $x_e$ to the equation (\ref{Gen_Inv-problem}) is unique in $\mathcal{G}$ (the uniqueness condition);
\item for any neighbourhood $\mathcal{O}\left(x_e\right) \subset \mathcal{G}$ of the solution $x_e$ to the equation (\ref{Gen_Inv-problem}), there is a neighbourhood $\mathcal{O}(y) \subset \mathcal{H}$ of  $y$ such that for all $y_\delta \in \mathcal{O}(y)$ the element $A^{-1} y_\delta=x_\delta$ belongs to the neighbourhood $\mathcal{O}\left(x_e\right)$, i.e., the operator $A^{-1}$ is continuous (the stability condition). \end{enumerate}
There are three causes leading to ill-posedness of the equation (\ref{Gen_Inv-problem}): 
\begin{itemize}
\item[(i)] $Range(A)= \overline{Range(A)}\ne \mathcal{H}$.
\item[(ii)]  $kernel(A)\ne \{0\}$
\item[(iii)] The range of $A$ is not closed in $\mathcal{H}$.
\end{itemize}

For some discussion on the issues and the possible solutions to each cause see some of the references mentioned before. 

The minimization problem   (\ref{eq:tikhonov}) is an example of what is called a Tikhonov-Phillips regularizer (cf. \cite{kaipio2006statistical}).

\begin{definition}
Let $\delta>0$ be a given constant. The Tikhonov–Phillips regularized solution $x_\delta \in \mathcal{H}$ is the minimizer of the functional
$$
F_\delta(x)=\|A x-y\|_Y^2+\delta\|x\|_X^2,
$$
provided that a minimizer exists. The parameter $\delta>0$ is called the regularization parameter.
\end{definition}
\begin{remark}
    Note that in (\ref{min_squares_regular})  by setting $x_0 = 0$, the problem simplifies to finding the solution with the minimum norm (or energy) that fits the data. This is the canonical form of Tikhonov regularization used when no specific prior estimate $x_0$ is available.
\end{remark}

\section{Differential operators with a noise}\label{DO-noise}

\subsection{Framework} \label{Malliavin-section}

Denote by $\hac$ a separable Hilbert space, finite or infinite dimensional.

Let $(\Omega, \mathcal{F}, \Pb; \hac)$ be a Gaussian  probability space. That means that $\hac$ is a closed subspace of $L^2(\Omega, \mathcal{F}, \Pb)$ whose elements are zero-mean Gaussian random variables; in such manner that we could define a isonormal Gaussian process $W=\big( W(h): h\in \hac\big)$. That means $W$ is a centered Gaussian family
of random variables such that $\E(W (h)W (g)) = \langle h,g\rangle_{\hac}$ for all $h, g \in \hac$, (cf. \cite{nualart2006malliavin}). \\
 We will be interested in the white noise case. That is, when we could interpret the elements of $\mathcal{H}$ as  stochastic integrals of functions in $L^2(T, \mathcal{B}(T), \mu)$ with respect to a Gaussian measure on the parameter space $T$, here $\mu$ is a $\sigma$-finite measure without atoms (cf. \cite{nualart2006malliavin}). Thus, by taking $\hac$ finite or infinite dimensional, we could study SDEs and SPDEs using this framework.

We define the Hilbert space $\mathbb{H}=L^2(\Omega;\hac)$, with norm given by 
$$
\|x\|_{\mathbb{H}}^2 := \E\big(\|x\|_{\mathcal{H}}^2\big)
$$

In this section, we consider the stochastic extension of the inverse problem (1.5), formulated as the operator equation:
\begin{equation} \label{Sto_Inv-problem}
    \mathcal{A}x = y + \mathcal{B}\eta,
\end{equation}
where $y \in \mathcal{H}$ represents the observation (or the mean data), and $\mathcal{B}\eta$ represents the stochastic perturbation, with $\eta$ being an isonormal process representing observational noise.

\textbf{On the Solution:}
Since the source term involves the random variable $\eta$, the solution $x$ is inherently stochastic. Therefore, we seek a solution in the space of square-integrable $\mathcal{G}$-valued random variables, denoted by $\mathbb{G} = L^2(\Omega, \mathcal{F}, \mathbb{P}; \mathcal{G})$. The equality in \eqref{Sto_Inv-problem} is understood in the mean-square sense; however, due to the ill-posed nature of the operator $\mathcal{A}$, an exact solution satisfying the equation almost surely may not exist. Consequently, we define the \textit{generalized solution} to \eqref{Sto_Inv-problem} as the unique minimizer of the regularized expected discrepancy functional:
\begin{equation}
    x_{sol} := \operatorname*{argmin}_{u \in \mathbb{G}} \frac{1}{2} \| \mathcal{A}u - (y + \mathcal{B}\eta) \|_{\mathbb{H}}^2.
\end{equation}
This formulation naturally leads to the stochastic optimization problem (2.4), where the solution is an estimator characterized by its Wiener Chaos expansion.

We will assume the following set of hypotheses.

 \begin{hypothesis}\label{On_operators}
\begin{itemize}
    \item[a)] The operator $\aac$ takes values in $\hac$.
    \item[b)] The mean value of $\aac x$ is equal to $A x$ from the deterministic problem (\ref{Gen_Inv-problem}).
\end{itemize}
 
 \end{hypothesis}

\begin{remark}
\begin{itemize}
    \item  Observe that given the stochasticity of $\eta$ in Eq. (\ref{Sto_Inv-problem}), thus the value $x$ that solve the inverse problem is stochastic too.  Therefore, we need to consider a solution of the inverse problem that lives in a suitable space of random variables (see Da Prato and Zabczyk  \cite{da-za1} for instance).
\item The assumption that the first moment $(\mathcal{A}u)_0$ coincides with the deterministic operator $Ax$ relies on the property of the It\^o integral, which vanishes in expectation. However, strictly speaking, this equality holds exactly only for linear drift terms, meaning for the case in which  $A$ is a linear operator. For non-linear systems, such as the logistic equation, Jensen's inequality implies that $\mathbb{E}[\mu(X_t)] \neq \mu(\mathbb{E}[X_t])$. In such cases, minimizing the misfit $\|(\mathcal{A}u)_0 - y\|^2$ approximates fitting the data to the projected mean dynamics of the system
\end{itemize}
\end{remark}

The kind of problems like Eq. (\ref{Sto_Inv-problem}) cover examples such as stochastic differential equations and stochastic partial differential equations.
Taking expectation in  Eq. (\ref{Sto_Inv-problem})  we have  $\E\big( \mathcal{A} x\big)=y$.

Thus the stochastic problem has as a particular case the deterministic problem (\ref{Gen_Inv-problem}).\\
Using the notation and definitions from appendix, we will assume the following set of hypotheses.

 \begin{hypothesis}
\begin{enumerate}
    \item We have the Wiener-chaos expansion for  $\aac x$ :
$$
\aac x = \sum_{\alpha \in \jac } (\aac x)_\alpha\,  \xi_\alpha\qquad\mbox{with } (\aac x)_\alpha := \E\big[ (\aac x )\, \xi_\alpha\big].
$$
    
\end{enumerate}
 
 \end{hypothesis}

Using the facts from the Remark \ref{1st-2nd_moments} jointly with the Hypotheses \ref{On_operators}-b), we have that 
\begin{equation}
    \mathbb{E}(\mathcal{A} x) =  \sum_{\alpha \in \mathcal{J} } (\mathcal{A} x)_{\alpha}\, \I_{|\alpha|=0} =: (\mathcal{A} x)_{0}\label{mean_WCE}
\end{equation}

\subsection{Stochastic inverse problem}
Consider  the  optimization problem for the stochastic equation (\ref{Sto_Inv-problem}):
\begin{equation}\label{min_squares_A}
v(x)=\textrm{argmin}_{x\in X}\, \frac{1}{2}\big\| y- \mathcal{A}x \big\|_\mathbb{H}^2. 
\end{equation}

We have the main result of this section.
\begin{theorem}\label{tikhonov}
Assume hypotheses 1 and 2 are satisfied. Thus, the inverse problem (\ref{min_squares_A}) is well-posed. 
\end{theorem}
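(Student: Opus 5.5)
The plan is to use the Wiener chaos expansion of Hypothesis 2 to decouple the stochastic least-squares functional in (\ref{min_squares_A}) into a family of deterministic problems, one per chaos level. Then we show that each one, and hence the whole, is a Tikhonov–Phillips problem in the sense of the Definition of Section \ref{sec:InvProb}, and we deduce Hadamard well-posedness.

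\textbf{Step 1 (decoupling).} Since $y\in\hac$ is deterministic, its chaos expansion has only the $|\alpha|=0$ component. Using orthonormality of $\{\xi_\alpha\}_{\alpha\in\jac}$ in $L^2(\Omega)$ (Parseval), we write
\begin{equation*}
\frac12\big\|y-\aac x\big\|_{\mathbb H}^2=\frac12\big\|y-(\aac x)_0\big\|_{\hac}^2+\frac12\sum_{|\alpha|\ge1}\big\|(\aac x)_\alpha\big\|_{\hac}^2 .
\end{equation*}
By (\ref{mean_WCE}) and Hypothesis \ref{On_operators}-b), $(\aac x)_0=\E(\aac x)=Ax_0$, where $x_0=\E x$. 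The first term is therefore exactly the deterministic discrepancy $\frac12\|Ax_0-y\|^2$. The remaining sum is a nonnegative functional of the higher chaos coefficients of $x$, obtained from the propagator structure. We will argue that it dominates a multiple of the norm of the fluctuation $\|x-\E x\|^2_{\mathbb G}$, and so it plays the role of the penalty $\delta\|x\|^2$.

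\textbf{Step 2 (existence and uniqueness).} We regard the functional as $F(x)=\|A x_0-y\|^2+\Phi(x)$ with $\Phi$ convex, continuous and coercive in the nonzero modes. $F$ is then a convex, lower semicontinuous, coercive functional on the Hilbert space $\mathbb G$. By weak compactness of bounded sets and weak lower semicontinuity, a minimizer exists.

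Uniqueness is the delicate point. If $A$ has nontrivial kernel, the zeroth mode is not uniquely determined by the data term alone. We would therefore use the Tikhonov-type term: either the higher modes contribute strict convexity, or we read the theorem, as the paper's framing suggests, with a regularization $\delta\|x\|^2_{\mathbb G}$ implicitly added. In the latter case the functional is $\delta$-strongly convex, and the minimizer is unique and characterized by the normal equation $(\aac^*\aac+\delta I)x=\aac^*y$.

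\textbf{Step 3 (stability).} From the normal equation, $\|(\aac^*\aac+\delta I)^{-1}\|\le\delta^{-1}$. Hence $\|x_{y}-x_{y'}\|_{\mathbb G}\le\delta^{-1}\|\aac^*\|\,\|y-y'\|$, which gives continuous dependence on the data; the same holds mode by mode. This establishes all three Hadamard conditions.

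The main obstacle is Step 2. Hypotheses 1 and 2 alone do not supply coercivity or injectivity, so the argument hinges on identifying the higher-chaos terms (or the Tikhonov term) as the source of strong convexity. I would first try to show $\sum_{|\alpha|\ge1}\|(\aac x)_\alpha\|^2\ge c\|x-\E x\|^2$ via the lower-triangular structure of the propagator. Failing that, I would make the regularization explicit, as in Definition of Tikhonov–Phillips.
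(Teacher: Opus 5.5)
\textbf{There is a genuine gap, and the paper's proof does not supply the missing step either.}

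Your Step 1 is exactly the paper's proof. You expand $\mathcal{A}x$ in chaos, drop the cross term because $\E[\xi_\alpha]=0$ for $|\alpha|\ge 1$, and use orthonormality to get $\|(\mathcal{A}x)_{\mathbf 0}-y\|^2_{\mathcal{H}}+\sum_{|\alpha|\ge1}\|(\mathcal{A}x)_\alpha\|^2_{\mathcal{H}}$. The paper then simply declares the second sum a strongly convex Tikhonov penalty and appeals to classical regularization theory. You are right that Hypotheses 1--2 give no such property, but your Step 2 does not close the gap:
\begin{itemize}
\item Your claim that $F$ is coercive on $\mathbb{G}$ fails in the zeroth mode. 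As you note yourself, the only term involving $\E x$ is $\|A\,\E x-y\|^2_{\mathcal{H}}$. That is the original deterministic least-squares problem with no penalty at all, so existence, not just uniqueness, fails whenever the range of $A$ is not closed.
\item Your first fallback, $\sum_{|\alpha|\ge1}\|(\mathcal{A}x)_\alpha\|^2_{\mathcal{H}}\ge c\|x-\E x\|^2_{\mathbb{G}}$, would at best control the fluctuations, never the mean. It is also false in general: it asserts that $\mathcal{A}$ is bounded below on centered elements, which is the opposite of ill-posedness.
\item Your second fallback proves well-posedness of a different functional, the one with $\delta\|x\|^2_{\mathbb{G}}$ added. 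The normal-equation argument also presupposes that $\mathcal{A}$ is linear and bounded, which the hypotheses do not say.
\end{itemize}

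In fact no argument can work from Hypotheses 1--2 alone. Take $\mathcal{A}x:=Kx$ pathwise, where $K\colon\mathcal{G}\to\mathcal{H}$ is compact and injective with dense, non-closed range $R(K)$; for example $Kf(t)=\int_0^t f(s)\,ds$ on $L^2(0,1)$. Both hypotheses hold, $\E(\mathcal{A}x)=K\,\E x$, and the functional equals $\frac12\|K\,\E x-y\|^2_{\mathcal{H}}+\frac12\E\|K(x-\E x)\|^2_{\mathcal{H}}$.
\begin{itemize}
\item For $y\in\mathcal{H}\setminus R(K)$ (e.g.\ $y\equiv1$) the infimum is $0$ and is not attained, so no minimizer exists.
\item For $y\in R(K)$ the minimizer $K^{-1}y$ is unique but does not depend continuously on $y$.
\end{itemize}
The chaos decomposition is only an orthogonal splitting of the same least-squares functional, so it cannot create regularization.

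A correct statement needs one of two changes. Either add an explicit penalty $\delta\|x\|^2_{\mathbb{G}}$, which is your second route and then reduces to the standard Tikhonov theorem. Or assume that $\mathcal{A}$ is bounded below, in which case no regularization is needed. You should present this as a change of hypothesis, rather than let Step 2 suggest the original claim has been proved.
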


\begin{proof}
We will demonstrate that the stochastic minimization problem (\ref{min_squares_A}) is mathematically equivalent to a Tikhonov-Phillips regularization for a deterministic problem, which guarantees its well-posedness.

Let $\mathbb{H}$ denote the appropriate space of square-integrable random variables with values in $\mathcal{H}$. Using the Wiener Chaos Expansion, the discrepancy functional can be expanded as follows:
\begin{align}\label{tech_1}
 \big\| \mathcal{A}x - y \big\|_\mathbb{H}^2 &= \E\Big[\big\| \mathcal{A}x - y \big\|_\mathcal{H}^2 \Big] = \E\Bigg[\Big\| \sum_{\alpha \in \jac} (\aac x)_\alpha \xi_\alpha - y \Big\|_\mathcal{H}^2 \Bigg] \nonumber \\
 &= \E\Bigg[\Bigg\| \Big( (\aac x)_{\mathbf{0}} - y \Big) + \sum_{\alpha \in \jac\,: |\alpha|\ge 1 } (\aac x)_\alpha \xi_\alpha \Bigg\|_\mathcal{H}^2 \Bigg],
\end{align}
where we have used the fact that for the multi-index $|\alpha| = 0$, the basis polynomial is $\xi_{\mathbf{0}} \equiv 1$.

Expanding the square of the norm yields three terms:
\begin{align}\label{tech_2}
 \big\| \mathcal{A}x - y \big\|_\mathbb{H}^2 &= \Big\| (\aac x)_{\mathbf{0}} - y \Big\|_\mathcal{H}^2 + \E\Bigg[\Big\| \sum_{\alpha \in \jac\,: |\alpha|\ge 1 } (\aac x)_\alpha \xi_\alpha \Big\|_\mathcal{H}^2 \Bigg]  + 2 \sum_{\alpha \in \jac\,: |\alpha|\ge 1 } \big\langle (\aac x)_{\mathbf{0}} - y , (\aac x)_\alpha \big\rangle_\mathcal{H} \, \E[\xi_\alpha].
\end{align}
By the properties of the WCE basis, the random variables $\xi_\alpha$ are centered for all $|\alpha| \ge 1$, meaning $\E[\xi_\alpha] = 0$. Consequently, the cross-term vanishes completely. 

Furthermore, due to the orthogonality of the basis $\{\xi_\alpha\}_{\alpha \in \jac}$, we have $\E[\xi_\alpha \xi_\beta] = \delta_{\alpha \beta}$. Applying this isometry to the variance term, the expectation of the squared norm simplifies to the sum of the squared norms. Assuming the basis is normalized such that $\E[\xi_\alpha^2] = 1$, we obtain:
\begin{equation}\label{tech_3}
 \big\| \mathcal{A}x - y \big\|_\mathbb{H}^2 = \underbrace{\Big\| (\aac x)_{\mathbf{0}} - y \Big\|_\mathcal{H}^2}_{\text{Data Fidelity}} \, + \, \underbrace{\sum_{\alpha \in \jac\,: |\alpha|\ge 1 } \Big\| (\aac x)_\alpha \Big\|_\mathcal{H}^2}_{\text{Regularization Penalty}}.
\end{equation}

The right-hand side of Equation (\ref{tech_3}) takes the exact form of a Tikhonov-Phillips regularized functional for the deterministic forward operator $x \mapsto (\mathcal{A}x)_{\mathbf{0}}$. The first term enforces data fidelity by matching the mean dynamics to the deterministic observations $y$, while the infinite sum of higher-order chaos coefficients acts as a strongly convex regularization penalty that minimizes the stochastic variance (or energy) of the system. By invoking classical regularization theory for deterministic inverse problems, this structure ensures the existence, uniqueness, and stability of the minimizer, concluding the proof.
\end{proof}

While the isometric properties of the Wiener Chaos Expansion are standard in stochastic analysis, Theorem \ref{tikhonov} explicitly bridges this spectral property with the deterministic regularization theory of inverse problems, demonstrating that the higher-order stochastic fluctuations naturally induce a Tikhonov-Phillips penalty. \\

Consequently, this theorem serves as the theoretical cornerstone of our methodology. By formally establishing that the spectral projection of the stochastic system inherently regularizes the inverse problem, we justify the transition from a computationally prohibitive stochastic optimization to a tractable deterministic one. Optimizing the resulting deterministic propagator via Stochastic Gradient Descent (SGD) is therefore not merely a heuristic for computational efficiency, but a mathematically rigorous strategy. It guarantees that the parameter estimation remains well-posed and remarkably stable against observational noise, effectively circumventing the severe instability and exhaustive sampling requirements typically associated with direct simulation-based inference.
\subsection{Well-posedness of the Minimization Problem}

The minimization of the functional defined in \eqref{min_squares_A} constitutes a variational approach to the inverse problem. To establish the well-posedness of this formulation—specifically the existence, uniqueness, and stability of the minimizer—we draw upon recent advances in Bayesian inverse problems in Banach spaces.

As demonstrated by \cite{Chen2024}, the regularized inverse problem can be interpreted as determining the mode Maximum A Posteriori (MAP estimator) of a posterior measure. They established that under general Banach space priors, the posterior distribution satisfies a Lipschitz continuity condition with respect to the observed data in the Hellinger metric. This result is crucial for our framework, as the variance term in (\ref{tech_3}) , $\sum_{|\alpha| \ge 1} \|(\mathcal{A}u)_\alpha\|^2$, acts as a strongly convex regularizer in the probability space.

Furthermore, Chen et al. proved that such formulations exhibit \textit{posterior contraction}, meaning that as the observational noise vanishes (or the amount of data increases), the probability mass of the solution concentrates around the true parameter value. This theoretical guarantee justifies the use of Equation (\ref{tech_3})  not merely as a heuristic, but as a well-posed mathematical problem where the stochastic "variance penalty" ensures the stability of the reconstruction even when the forward operator is non-linear.
  
\section{Classical examples of differential operators}\label{examples}

\subsection{Stochastic differential equations}

Suppose that a process $X_t$  is given by the stochastic differential equation (SDE)
 \begin{eqnarray}\label{SDE1}
 dX(t)&=& f(X_t)dt + \sigma(X(t))dB_t, \\
X(0)&=& x_0.\nonumber
\end{eqnarray}

 We assume that the functions $f$ and $\sigma$ satisfy suitable assumptions to ensure the existence and uniqueness of a solution (see \cite{oksendal2013stochastic} for instance). 
In addition, since we are looking to estimate some parameters from the SDE  (\ref{SDE1}) then we will make explicit the dependence of the functions $f$  and $\sigma$ on some parameters. More precisely, we suppose that $f(x)=f(x;\theta_1)$  and $\sigma(x)=\sigma(x,\theta_2)$. Here,  $(\theta_1,\theta_2)$, are the parameters involved for the inverse problem.

The SDE (\ref{SDE1}) corresponds to the case where the operator $\mathcal{A}$ is an  ordinary differential operator.

We assume the following.

 \begin{hypothesis}\label{hypo1}
 The functions $f$ and $\sigma$  can we written as 
$$ f(x;\Theta)= F(x)g_1(\theta_1);\quad  \sigma(x;\Theta)= \Gamma(x)g_2(\theta_2),$$ with $g_1,g_2$ being differentiable functions with respect to the parameters $\theta_1$ and $\theta_2$, respectively. 
  \end{hypothesis}
 Note that since we are interested in estimating the parameters $\theta_1$ and $\theta_2$ we assume that the functions $g_1(\theta_1)$ and $g_2(\theta_2)$ are independent of time and deterministic.

\subsubsection{Wiener Chaos expansion for the SDE} \label{WienerChaos-BM}

In this section, we present a procedure to obtain the propagator for the SDE (\ref{SDE1}). 
We know that the solution $X$ belongs to the space $L^2(\Omega; [0,T])$ then, we can write the solution as in (\ref{eq:WCE}):
\begin{equation}\label{chaos-X}
X(t)=\sum_{m\in\mathcal{J}} X_m(t) \xi_m 
\end{equation}

where $ X_m(t)=\E\big( X(t) \xi_m \big) $, and we are using the notation from the appendix. Thus, we have

\begin{align}\label{xm1}
\E\bigg[ \Big( X(t) \Big)\xi_m \bigg] &=  \E\bigg[ \Big( x_0 + \int_0^t f(X(s)) ds + \int_0^t \sigma(X(s)) dB(s) \Big) \xi_m \bigg]\nonumber\\
 &= \E\bigg[ x_0 \xi_m \bigg] +\E\bigg[ \Bigg( \int_0^t f(X(s)) ds \Bigg)\xi_m \bigg] + \E\bigg[ \Bigg(\int_0^t \sigma(X(s)) dB(s) \Bigg) \xi_m \bigg]\nonumber\\
 &=  x_0 \I_{|m|=0} + \int_0^t g_1(\theta_1) \E\Big[ F(X(s)) \xi_m\Big] ds +   \E\bigg[ \Bigg(\int_0^t \sigma(X(s)) dB(s) \Bigg) \xi_m \bigg]\nonumber\\
 &= x_0 \I_{|m|=0} +  g_1(\theta_1) \int_0^t  F_m(X(s))ds  +  g_2(\theta_2) \E\bigg[ \Bigg(\int_0^t \Gamma(X(s)) dB(s) \Bigg) \xi_m \bigg].
\end{align}

For the last integral in (\ref{xm1}) we will use the results from the appendix and Hypothesis \ref{hypo1}. Indeed, we know that the integrand is adapted to the generated filtration $\mathcal{F}_t$ generated by the noise $B(t), 0\le t\le T$. Thus, the Skorohod integral is reduced to the usual It\^o stochastic integral. In other words, we can write the following
$$
 \int_0^t \Gamma(X(s)) dB_s = \delta\Big(\Gamma(X(\cdot))\I_{[0,t]}(\cdot)  \Big)
$$
Thus, using this expression and (\ref{def-delta}) we get
\begin{align}\label{A-term}
\E\bigg[ \Big( \int_0^t\Gamma(X(s)) dB_s \Big) \xi_m \bigg] &= \E\Bigg( \delta\bigg(\Gamma(X(\cdot)) \I_{[0,t]}(\cdot)   \bigg) \xi_m \Bigg) = \E\Big(\big\langle \Gamma(X(\cdot))  \I_{[0,t]}(\cdot)  , D \xi_m \big\rangle_{\mathbb{H}} \Big) \nonumber\\
&= \E\bigg[  \int_0^t  \Gamma(X(s)) D \xi_m  ds  \bigg] = \sum_{i=1}^\infty \sqrt{m_i} \int_0^t e_i(s) \E\Big[ \xi_{m^-(i)} \sigma(X(s))   \Big] ds \nonumber\\
&= \sum_{i=1}^\infty \sqrt{m_i}    \int_0^t e_i(s) \Gamma_{m^-(i)}(X(s))    ds
\end{align}

Plugging together (\ref{xm1}) - (\ref{A-term}) we have

\begin{align}\label{xm-equation}
X_m(t) &= x_0\I_{\{|m|= 0\} }  +   g_1(\theta_1) \int_0^t  F_m(X(s))ds  +  g_2(\theta_2) \sum_{i=1}^\infty \sqrt{m_i}    \int_0^t e_i(s) \Gamma_{m^-(i)}(X(s))    ds
\end{align}
\begin{remark}
Note that  depending of every SDE one is interested in approximating using the WCE, the equations (\ref{xm-equation}) will be in fact a system of coupled ordinary differential equations. Such a system is called the propagator and can be solved iteratively (see \cite{lototsky2017stochastic} for instance).
\end{remark}

Some examples are:

\begin{itemize}
\item $F(x)= \alpha_0 x$ and $\Gamma(x)=\alpha_1  +\alpha_2 x$, then, if $\alpha_1=0$, $\alpha_2>0$ then we get the geometric Brownian motion, and when $\alpha_1>0$,$\alpha_2=0$ we recover the OU process.
\item $F(x)=  x (1-x)$ and   $\Gamma(x)=\alpha_1 x$, we obtain a multiplicative (affine) logistic SDE.
\end{itemize}

\subsubsection{Inverse problem for the SDE}

Let us to write  $X(t;\Theta)$ instead of $X(t)$, just to denote the dependence on the parameters, recall that $\Theta=(\theta_1,\theta_2)$.

We define the objective function, $u:\IR^2 \rightarrow \IR$, as 

\begin{equation}\label{obj_function}
u(\Theta):= \E\Bigg( \frac{1}{N} \sum_{k=1}^N \int_0^T [X^k(t;\Theta)-\hat{x}^k(t)]^2  dt\Bigg),
\end{equation}

where $\Theta=(\theta_1,\theta_2)$ are the parameters to be estimated and $X^k(t;\Theta)$ is the SDE with initial condition $X^k(0;\Theta)=\hat{x}^k(0)$. $\hat{x}^k(t)$ is the $k$ observed trajectory along time $[0,T]$, and we assume that we have  $N$ trajectories, and we will use them to estimate the parameters $\Theta$.

We now use the spectral representation (\ref{chaos-X}) in (\ref{obj_function}).
\begin{align}\label{u_1}
u(\Theta)&= \E\Bigg( \frac{1}{N} \sum_{k=1}^N \int_0^T \Big[ \sum_{ m\in\mathcal{J} }  X_m^k(t;\Theta) \xi_m -\hat{x}^k(t)\Big]^2 dt \Bigg)= \frac{1}{N} \sum_{k=1}^N \int_0^T \E\Big( \Big[ \sum_{ m\in\mathcal{J} }  X_m^k(t;\Theta) \xi_m -\hat{x}^k(t)\Big]^2  \Big)  dt \nonumber\\
&=  \frac{1}{N} \sum_{k=1}^N \int_0^T \E\Bigg( \Big[  X_{\mathbf{0}}^k(t;\Theta)  -\hat{x}^k(t)\Big] + \Big[ \sum_{\{m\in\mathcal{J}: |m|\ge 1\} }  X_m^k(t;\Theta) \xi_m \Big] \Bigg)^2  dt \nonumber\\
&= \frac{1}{N} \sum_{k=1}^N \int_0^T  \Big[  X_{\mathbf{0}}^k(t;\Theta)  -\hat{x}^k(t)\Big]^2  dt  + \frac{1}{N} \sum_{l=1}^N \int_0^T   \sum_{\{m\in\mathcal{J}: |m|\ge 1\} }  \Big[X_m^k(t;\Theta)  \Big]^2 dt.
\end{align}
where in the last step we have used the fact that $\{\xi_m\}$ is a basis for the space $L^2(\Omega)$. 

\begin{remark}
 Observe that from (\ref{u_1}) we can reinterpret the function $u(\Theta)$ as 
\begin{align}
  u(\Theta) &= \mbox{Square errors between the observations and the mean of $X$ } \nonumber \\
&\,\quad + \mbox{ Variance of the process $X$}.
\end{align}
\end{remark}

\begin{remark}
 We rewrite equation (\ref{xm-equation}) as follows, to underline the dependence on the parameters $\Theta=(\theta_1,\theta_2)$,  that is
\begin{align}\label{xm-equation1}
X_m(t;\Theta) &= x_0\I_{\{|m|= 0\} }  + g_1(\theta_1) \int_0^t  F_m(X(s;\Theta))ds + g_2(\theta_2) \sum_{i=1}^\infty \sqrt{m_i}    \int_0^t e_i(s) \Gamma_{m^-(i)}(X(s;\Theta))    ds
\end{align}
\end{remark}

\begin{remark}
It is worth noting that by employing the WCE, the original stochastic minimization problem is effectively reformulated as a deterministic optimization task. Consequently, the objective function is now evaluated through the evolution of the deterministic propagator equations. 
\end{remark}

\subsection{Stochastic partial differential equations}\label{SPDE-WCE}

We now present the case when the operator $\mathcal{A}$ corresponds to a partial differential equation. We follow  \cite{lototsky2006wiener} (see also \cite{lototsky2006stochastic}, or the book \cite{lototsky2017stochastic} for a more general type of SPDEs). We will use the notation and the version of the Wiener-chaos from the appendix.\\
Let $(V , H, V' )$ be a normal triple of Hilbert spaces so that $V \hookrightarrow  H \hookrightarrow  V'$ with both embeddings continuous.

We focus on the following stochastic equation
\begin{align}\label{stoc_PDE}
dX(t)&= A X(t)\, dt  \, + \, \sigma \sum_{k\ge 1} G_k(X(t)) \, dB_k(t),
\end{align}
subject to a suitable boundary and initial conditions. Here, the operator 
$A$ is a linear operator from   $V$ to $V'$. $\mathcal{H}$ is a Hilbert space of the form $L^2(\mathcal{O})$ with $\mathcal{O}$ being a bounded and smooth enough domain in $\R^d$. In addition, $\{B_k(\cdot)\}$ is a sequence of independent standard Wiener process, defined on a stochastic basis  $(\Omega,\mathcal{F},\{\mathcal{F}_t\}_{t\geq 0}, \Pb)$ that satisfies the usual assumptions.

We know that a solution of SPDE (\ref{stoc_PDE}) is given as follows. An $\mathcal{F}_t$-adapted process $X \in L^2(W; L^2((0,T); V))$ is
called a {\it square integrable} solution of (\ref{stoc_PDE})  if, for every $v \in V$ , there exists a measurable subset $\Omega'$ of $\Omega$ with $P(\Omega' ) = 1$, so that, for all $0 \le t \le T$ , the equality
$$
\big< X(t), v \big>_{\mathcal{H}} = \big< X(0), v \big>_{\mathcal{H}}+
\int_0^t \big< A X(s), v \big>_{\mathcal{H}} ds
+ \sum_{k\ge 1} \int_0^t  \, \big< G_k(X(s)), v \big>_{\mathcal{H}} \, dB_k(s)
$$
holds on $\Omega'$.

Furthermore, in \cite{lototsky2006wiener}  the author present the following result.
An  $\mathcal{F}_t$-adapted process $X$ is a square integrable solution of (\ref{stoc_PDE})  is and only if 
$$
X(t) = \sum_{\alpha \in \mathcal{J} } X_\alpha\, \xi_\alpha
$$
so that the Fourier-Hermite coefficients $X_\alpha$ satisfy a suitable integrability condition and solves a system of coupled PDEs, so-called the propagator (see Th. 3.4 in \cite{lototsky2006wiener}). Indeed, $X_\alpha$ solves
\begin{align}\label{propagator_SPDE}
 X_\alpha(t)&= X(0)\I_{|\alpha|=0} +  \int_0^t A X_\alpha(s) \, ds + \sum_{i,k} \sqrt{\alpha_i} \int_0^t  \, G_k(X_{\alpha^-(i)}(s)) \, e_i(s)\, ds.
\end{align}

Thus, we could write the solution of the SPDE (\ref{stoc_PDE}) using the WCE as

\begin{equation}\label{chaos-SPDE}
X(t)=\sum_{\alpha \in\mathcal{J}} X_\alpha(t)\, \xi_m 
\end{equation}
where $X_\alpha(t)$ solves the system (\ref{propagator_SPDE}) for $\alpha \in\mathcal{J}$.

\subsubsection{Inverse problem for the SPDE}
We write by $X(t;\Theta)=X(t)$ to stand out the dependence of the SPDE of some parameters that one could be interested. For instance, if we consider a stochastic heat equation, we could be interested in estimate the diffusivity parameter, besides the diffusion parameter.

As before, we define the objective function, $u:\IR^2 \rightarrow \IR$, as 

\begin{equation}\label{obj_function2}
u(\Theta):= \E\Bigg( \frac{1}{N} \sum_{k=1}^N \int_0^T [X^k(t;\Theta)-\hat{x}^k(t)]^2  dt\Bigg),
\end{equation}

where $\hat{x}^k(t)$ is the $k$ realization of the SPDE along time $[0,T]$, and we assume that we have  $N$ of them.

We  use the spectral representation (\ref{chaos-SPDE}) in (\ref{obj_function2}) and, as before, we obtain
\begin{eqnarray}\label{u_2}
u(\Theta)&=&  \frac{1}{N} \sum_{k=1}^N \int_0^T  \Big[  X_{\mathbf{0}}^k(t;\Theta)  -\hat{x}^l(t)\Big]^2  dt  + \frac{1}{N} \sum_{l=1}^N \int_0^T   \sum_{\{m\in\mathcal{J}: |m|\ge 1\} }  \Big[X_m^k(t;\Theta)  \Big]^2 dt.
\end{eqnarray}
which is a well-posed problem since we recognize the Tikhonov–Phillips regularized solution for some deterministic problem.

\section{Learning parameters for SDEs}\label{sec:LearSDEs}
\subsection{Stochastic differential equations}
Suppose that a process $X(t)$  is given by the stochastic differential equation (SDE)
 \begin{align}
 dX(t)&= f(X(t))dt + \sigma(X(t))dB_t, \\
X(0)&=x_0.\nonumber
 \end{align}

 We assume that the functions $f$ and $\sigma$ satisfy suitable assumptions to ensure the existence and uniqueness of a solution (see \cite{oksendal2013stochastic} for instance). In particular, we will assume that $X\in L^2(\Omega; [0,T])$.
In addition, since we are looking to estimate some parameters from the SDE  \eqref{SDE1} then we will make explicit the dependence of the functions $f$  and $\sigma$ on some parameters. More precisely, we suppose that $f(x)=f(x;\theta_1)$  and $\sigma(x)=\sigma(x,\theta_2)$. Here,  $(\theta_1,\theta_2)$, are the parameters we are interested in.

 Using the Hypotheses \ref{hypo1} this, we have that
\begin{equation}\label{partial0}
 \frac{\partial f(x;\Theta)}{\partial \theta_2} = \frac{\partial \sigma(x,\Theta)}{\partial \theta_1} = 0.
\end{equation}
  
 Note that since we are interested in estimating the parameters $\theta_1$ and $\theta_2$, thus we assume that the functions $g_1(\theta_1)$ and $g_2(\theta_2)$ are independent of time and deterministic.

\subsection{Wiener Chaos expansion for the SDE} 

It is recalled that the SDE \eqref{SDE1} admits the following representation:
\begin{equation}
X(t)=\sum_{m\in\mathcal{J}} X_m(t) \xi_m 
\end{equation}

where $ X_m(t)=\E\big( X(t) \xi_m \big) $, solves

\begin{align}
X_m(t) &= x_0\I_{\{|m|= 0\} }  +   g_1(\theta_1) \int_0^t  F_m(X(s))ds +  g_2(\theta_2) \sum_{i=1}^\infty \sqrt{m_i}    \int_0^t e_i(s) \Gamma_{m^-(i)}(X(s))    ds
\end{align}

\subsection{Learning parameters using the WCE} From this point we write  $X(t;\Theta)$ instead of $X(t)$, to denote the dependence on the parameters, recall that $\Theta=(\theta_1,\theta_2)$.

We are interested in estimating the parameters $\Theta=(\theta_1,\theta_2)$ using the stochastic gradient descent (SGD) method (see \cite{buduma2022fundamentals} or \cite{ketkar2017stochastic} for instance)

The SGD method usually estimates the parameters by minimizing an objective function. Suppose that we have observed $N$ trajectories of the process $X$ in the interval $[0,T]$ and denote the $k$-th trajectory by $\hat{x}^k(t)$. 

We will assume that, for $T>0$, we have observed $\hat{x}^k(t)$ in the interval  $[0, T]$ in times
$0=t_0<t_1\ldots,t_n=T$ with $n$ large enough such that, for $i=1,\ldots,n$, $\Delta_n:=t_i-t_{i-1}=\tfrac{T}{n}$  is close to zero. 

Define the quadratic variation of $\hat{x}^k(t)$, up to time $t$, as

$$
\langle\hat{x}^k,\hat{x}^k\rangle_t:=\sum_{i=1}^{n^*}\, [x^k(t_i)-x^k(t_{i-1})]^2.
$$
 where $n^*$ is such that $t_{n^*} \le t< t_{n^*+1} $.

 In a similar vein, we also define the process
 
 $$
\langle\hat{x}^k\rangle_t^2:= \frac{\Delta_n}{2} \sum_{i=1}^{n^*}\,\Big( [x^k(t_i)]^2+[x^k(t_{i-1})]^2 \Big) \approx \int_0^t [x^k(t)]^2 dt.
$$

We recall the {\it objective function}, also called the {\it loss function}, $\widehat{u}:\IR^p \rightarrow \IR$, as 

\begin{equation}
\widehat{u}(\Theta):= \E\Bigg( \int_0^T \Big[X(t;\Theta)-\frac{1}{N} \sum_{k=1}^N \hat{x}^k(t) \Big]^2  dt\Bigg),
\end{equation}

where $\Theta=(\theta_1,\ldots,\theta_p)$ are the parameters to be estimated and $X(t;\Theta)$ is the SDE with initial condition $X(0;\Theta)=\hat{x}(0)$ which is the initial condition for all the observed trajectories.

By recalling \eqref{u_1} we have
\begin{align}
\widehat{u}(\Theta)&=   \int_0^T  \Big[  X_{\mathbf{0}}(t;\Theta)  -  \frac{1}{N} \sum_{k=1}^N\hat{x}^k(t)\Big]^2  dt +  \int_0^T   \sum_{\{m\in\mathcal{J}: |m|\ge 1\} }  \Big[X_m(t;\Theta)  \Big]^2 dt.
\end{align}

However, for numerical purposes the expression \eqref{u_2} is not useful to estimate the diffusion parameter; thus we propose the following modification of the functional $\widehat{u}$. We define

\begin{align}\label{u_1_1}
u(\Theta)&=  \Bigg(  \int_0^T  \Big[  X_{\mathbf{0}}(t;\Theta)  - \frac{1}{N}\sum_{k=1}^N \hat{x}^k(t)\Big]^2  dt \Bigg)  + \, \Bigg(  \int_0^T \Bigg[  \sum_{\{m\in\mathcal{J}: |m|\ge 1\} }  \Big[X_m(t;\Theta)  \Big]^2  - \frac{1}{N}\sum_{k=1}^N \frac{  \langle\hat{x}^k,\hat{x}^k\rangle_t }{\langle\hat{x}^k\rangle_t^2  } \Bigg]^2 dt \Bigg)
\end{align}

 This paper will address the case of $p=2$, which means we will estimate two parameters, however, transitioning to the general case is a straightforward process.

\begin{remark}
It's important to highlight that the recent modification in the last term, ensures the model's energy fluctuations align with the empirical quadratic variation observed in the data.
\end{remark}

\begin{remark}
To underline the dependence on the parameters $\Theta=(\theta_1,\theta_2)$,  we rewrite the Equation \eqref{xm-equation} as follows,

\begin{align}\label{xm-equation2}
X_m(t;\Theta) &= x_0\I_{\{|m|= 0\} }  + g_1(\theta_1) \int_0^t  F_m(X(s;\Theta))ds + g_2(\theta_2) \sum_{i=1}^\infty \sqrt{m_i}    \int_0^t e_i(s) \Gamma_{m^-(i)}(X(s;\Theta))    ds
\end{align}

\end{remark}

\begin{remark}
Observe that by using the WCE we have now the problem of minimizing the function $u$ is now equivalent to minimizing a {\it deterministic} set of equations \eqref{u_1_1}, which is derived from the propagator. 
\end{remark}

At this point we calculate the gradient of the function $u:\IR^2 \rightarrow \IR$ which is defined at the point  $\Theta=(\theta_{1},\theta_{2})$ as 
$$
V(\Theta)= \nabla u(\Theta) = {\begin{bmatrix}{\frac {\partial u}{\partial \theta_{1}}}(\Theta)\\{\frac {\partial u}{\partial \theta_{2}}}(\Theta)\end{bmatrix}}.
$$

Thus, we write the partial derivative from the last expression and we have that for $j=1,2$ they are

\begin{align}\label{partial_uj_1}
\frac{\partial u}{\partial \theta_j}(\Theta) &= 2 \Bigg( \int_0^T  \Big[  X_{\mathbf{0}}(t;\Theta)  - \frac{1}{N} \sum_{k=1}^N \hat{x}^k(t)\Big]    \,\cdot \, \frac{\partial  X_{\mathbf{0}}(t;\Theta) }{\partial \theta_j} dt \Bigg)\nonumber\\ 
& + 4  \Bigg(  \int_0^T \Bigg[  \sum_{\{m\in\mathcal{J}: |m|\ge 1\} }  \Big[X_m (t;\Theta)  \Big]^2  - \frac{1}{N}\sum_{k=1}^N \frac{  \langle\hat{x}^k,\hat{x}^k\rangle_t }{\langle\hat{x}^k\rangle_t^2  } \Bigg] 
\cdot  \sum_{\{m\in\mathcal{J}: |m|\ge 1\} } X_{\mathbf{m}}(t;\Theta) \,\frac{\partial  X_{\mathbf{m}}(t;\Theta) }{\partial \theta_j}\, dt \Bigg) 
\end{align}

For any $m\in \mathcal{J}$ we calculate $\frac{\partial  X_m^k(t;\Theta) }{\partial \theta_j} $ from \eqref{xm-equation2}.   At this point we use the Hypothesis \ref{hypo1}; more precisely, we use \eqref{partial0}, this let us

\begin{align}\label{xm-derivative}
\frac{\partial  X_m^k(t;\Theta) }{\partial \theta_1} &=  g_1'(\theta_1) \int_0^t    X^k(s;\Theta)) ds \,+ \, g_1(\theta_1) \int_0^t    \frac{\partial F_m(X^k(s;\Theta))} {\partial \theta_1} ds \\
\frac{\partial  X_m^k(t;\Theta) }{\partial \theta_2} &=  g_2'(\theta_2) \sum_{i=1}^\infty \sqrt{m_i}    \int_0^t e_i(s)  \Gamma_{m^-(i)}(X^k(s;\Theta))    ds  + g_2(\theta_2) \sum_{i=1}^\infty \sqrt{m_i}    \int_0^t e_i(s) \frac{\partial \Gamma_{m^-(i)}(X^k(s;\Theta))  } {\partial \theta_2}    ds
\end{align}

Then, using \eqref{partial0} we obtain the partial derivatives of u:
\begin{align}\label{partial_u1}
\frac{\partial u}{\partial \theta_1}(\Theta) &=2 g_1'(\theta_1) \int_0^T  \Big[  X_{\mathbf{0}}(t;\Theta)  -\frac{1}{N} \sum_{k=1}^N \hat{x}^k(t)\Big] \cdot    \Bigg(\int_0^t     F_\mathbf{0}(X^k(s;\Theta)) ds\Bigg) dt\nonumber\\ 
 &\quad +2 g_1(\theta_1) \int_0^T  \Big[  X_{\mathbf{0}}(t;\Theta)  -\frac{1}{N}\sum_{k=1}^N \hat{x}^k(t)\Big] \cdot    \Bigg(\int_0^t    \frac{\partial F_\mathbf{0}(X^k(s;\Theta))} {\partial \theta_1} ds\Bigg) dt\nonumber\\ 
&\quad +2 g_1'(\theta_1)
\sum_{k=1}^N \int_0^T  \Bigg[  \sum_{\{m\in\mathcal{J}: |m|\ge 1\} }  \Big[X_m (t;\Theta)  \Big]^2  - \frac{1}{N}\sum_{k=1}^N \frac{  \langle\hat{x}^k,\hat{x}^k\rangle_t }{\langle\hat{x}^k\rangle_t^2  } \Bigg] \cdot  \Bigg(\int_0^t     F_m(X^k(s;\Theta)) ds\Bigg) dt\nonumber\\ 
&\quad +2 g_1(\theta_1) \sum_{k=1}^N \int_0^T   \Bigg[  \sum_{\{m\in\mathcal{J}: |m|\ge 1\} }  \Big[X_m (t;\Theta)  \Big]^2  - \frac{1}{N}\sum_{k=1}^N \frac{  \langle\hat{x}^k,\hat{x}^k\rangle_t }{\langle\hat{x}^k\rangle_t^2  } \Bigg] \cdot  \Bigg(\int_0^t    \frac{\partial F_m(X^k(s;\Theta))} {\partial \theta_1} ds\Bigg) dt,
\end{align}
and 
\begin{align}\label{partial_u2}
\frac{\partial u}{\partial \theta_2}(\Theta) &= g_2'(\theta_2) \frac{2}{N} \sum_{k=1}^N \int_0^T  \Big[  X_{\mathbf{0}}^k(t;\Theta)  -\hat{x}^k(t)\Big] \cdot \Bigg(  \sum_{i=1}^\infty \sqrt{0}    \int_0^t e_i(s)  \Gamma_{\mathbf{0}^-(i)}(X^k(s;\Theta))     ds \Bigg)   dt\nonumber\\ 
&\quad+ g_2(\theta_2) \frac{2}{N} \sum_{k=1}^N \int_0^T  \Big[  X_{\mathbf{0}}^k(t;\Theta)  -\hat{x}^k(t)\Big] \cdot \Bigg(  \sum_{i=1}^\infty \sqrt{0}    \int_0^t e_i(s) \frac{\partial \Gamma_{\mathbf{0}^-(i)}(X^k(s;\Theta))  } {\partial \theta_2}    ds \Bigg)   dt\nonumber\\ 
&\quad +  g_2'(\theta_2) \frac{2}{N} \sum_{k=1}^N \int_0^T   \sum_{\{m\in\mathcal{J}: |m|\ge 1\} }  X_m^k(t;\Theta)  \cdot \Bigg(  \sum_{i=1}^\infty \sqrt{m_i}    \int_0^t e_i(s)  \Gamma_{m^-(i)}(X^k(s;\Theta))     ds \Bigg)  dt\nonumber\\ 
&\quad +  g_2(\theta_2) \frac{2}{N} \sum_{k=1}^N \int_0^T   \sum_{\{m\in\mathcal{J}: |m|\ge 1\} }  X_m^k(t;\Theta)  \cdot \Bigg(  \sum_{i=1}^\infty \sqrt{m_i}    \int_0^t e_i(s) \frac{\partial \Gamma_{m^-(i)}(X^k(s;\Theta))  } {\partial \theta_2}    ds \Bigg)  dt\nonumber\\ 
& =   g_2'(\theta_2) \frac{2}{N} \sum_{k=1}^N \int_0^T   \sum_{\{m\in\mathcal{J}: |m|\ge 1\} }  X_m^k(t;\Theta)  \cdot \Bigg(  \sum_{i=1}^\infty \sqrt{m_i}    \int_0^t e_i(s)  \Gamma_{m^-(i)}(X^k(s;\Theta))     ds \Bigg)  dt\nonumber\\ 
&\quad +  g_2(\theta_2) \frac{2}{N} \sum_{k=1}^N \int_0^T   \sum_{\{m\in\mathcal{J}: |m|\ge 1\} }  X_m^k(t;\Theta)  \cdot \Bigg(  \sum_{i=1}^\infty \sqrt{m_i}    \int_0^t e_i(s) \frac{\partial \Gamma_{m^-(i)}(X^k(s;\Theta))  } {\partial \theta_2}    ds \Bigg)  dt.\\
\end{align}
 It is well-known that $u(\Theta )$ decreases fastest if one goes from, an initial, $\Theta\in\IR^2$, in the direction of the negative Gradient of  $u$  at $\Theta  \, -\nabla u(\Theta )$ (see for instance \cite{chong2004introduction}).
 Thus, it follows that it is possible to start with a guess $\Theta_0\in\IR^2$ and consider the sequence 
 $$
 \Theta_{k+1} = \Theta_k \, - \gamma \nabla u(\Theta_k ),
 $$
 for a small enough step size or {\it learning rate} $\gamma \in \IR^{+}$. Then, 
 $u(\Theta_k ) \ge u(\Theta_{k+1} )$

We will consider $\gamma$ as in the Barzilai–Borwein method (see \cite{fletcher2005barzilai}
):

$$
\gamma_{k}={\frac {\left|\left(\Theta_{k}-\Theta_{k-1}\right)^{T}\left[\nabla u(\Theta_{k})-\nabla u(\Theta_{k-1})\right]\right|}{\left\|\nabla  u(\Theta_{k})-\nabla u(\Theta_{k-1})\right\|^{2}}}
$$

With some additional hypotheses and this choice of $\gamma$, we have that the convergence to a local minimum can be guaranteed. (see \cite{chong2004introduction},  
\cite{Tan-Co2016}, \cite{garrigos2024}). 

\subsubsection{Convergence of Stochastic Gradient}

To ensure convergence using the Barzilai–Borwein method, certain hypotheses must be satisfied. Let the objective function be defined as the empirical risk $u(\Theta)=\frac{1}{N} \sum_{i=1}^N u_i (\Theta)$. For the SGD-BB algorithm to converge, each component $u_i$ must satisfy the following properties:
\begin{enumerate}
    \item The function $u_i$ is $\mu$-strongly convex, i.e., there exists $\mu > 0$ such that for all $\Theta, \Lambda$: 
    $$u_i (\Theta)\geq u_i (\Lambda) +\nabla u_i (\Lambda)^T (\Theta-\Lambda)+\frac{\mu}{2}\|\Theta-\Lambda\|_2^2.$$
    \item The gradient of $u_i$ is $L$-Lipschitz continuous, i.e., there exists $L > 0$ such that: 
    $$\|\nabla u_i (\Theta)-\nabla u_i(\Lambda)\|_2\leq L \|\Theta-\Lambda\|_2 .$$
\end{enumerate}

First, observe that the core component of the objective function, given by the $L^2$ error of the mean trajectory:
$$
\int_0^T \Big[ X_{\mathbf{0}}(t;\Theta) - \frac{1}{N}\sum_{k=1}^N \hat{x}^k(t)\Big]^2 dt,
$$
is convex, a property that follows from the separation of parameters established in Hypotheses \ref{hypo1}. Furthermore, since the full objective functional $u(\Theta)$ incorporates Tikhonov regularization, the problem becomes strictly strongly convex, as proven in Theorem \ref{tikhonov} (see also \cite[Lemma~2.12]{garrigos2024}). \\
To verify the Lipschitz condition for the gradient, we introduce the notation $\bar{\hat{x}}(t)=\frac{1}{N} \sum_{k=1}^N \hat{x}^k(t)$. Evaluating the difference of the gradients for two distinct parameter vectors $\Theta_1$ and $\Theta_2$ yields:
\begin{align}
   & \int_0^T  \Big[  X_{\mathbf{0}}(t;\Theta_1)  - \bar{\hat{x}}(t)\Big]    \frac{\partial  X_{\mathbf{0}}(t;\Theta_1) }{\partial \theta_j} dt -\int_0^T  \Big[  X_{\mathbf{0}}(t;\Theta_2)  - \bar{\hat{x}}(t)\Big]    \frac{\partial  X_{\mathbf{0}}(t;\Theta_2) }{\partial \theta_j} dt \nonumber \\
    &= \int_0^T \Bigg( \Big[ X_{\mathbf{0}}(t;\Theta_1) - \bar{\hat{x}}(t) \Big] \Big[\frac{\partial X_{\mathbf{0}}(t;\Theta_1) }{\partial \theta_j} - \frac{\partial X_{\mathbf{0}}(t;\Theta_2) }{\partial \theta_j}\Big]  + \Big[ X_{\mathbf{0}}(t;\Theta_1) - X_{\mathbf{0}}(t;\Theta_2) \Big] \frac{\partial X_{\mathbf{0}}(t;\Theta_2) }{\partial \theta_j} \Bigg) dt. \label{eq:lipschitz_bound}
\end{align}
From Equation \eqref{eq:lipschitz_bound}, it becomes evident that we must impose the hypothesis that $X_{\mathbf{0}}$ has continuous second-order derivatives with respect to $\Theta$, and that the parameter space $\Theta$ is constrained to a compact, bounded interval. Under these conditions, $X_{\mathbf{0}}$ and its derivatives are bounded. Consequently, both the difference of the states $\big|X_{\mathbf{0}}(\Theta_1) - X_{\mathbf{0}}(\Theta_2)\big|$ and the difference of their derivatives can be strictly bounded by a constant multiple of $\|\Theta_1 - \Theta_2\|_2$. 
By applying the triangle inequality and the Cauchy-Schwarz inequality to the integral, the gradient of the objective function satisfies the $L$-Lipschitz condition. Combining this with the strong convexity induced by the regularization, we guarantee convergence to a local minimum, as established in \cite[Theorem~3.8]{Tan-Co2016}.

Finally, the strong convexity of the regularized objective function $u$ implies that any local minimum is uniquely the global minimum. Therefore, the sequence generated by the stochastic gradient descent is mathematically guaranteed to converge to the global optimum of the parameter space.

\begin{algorithm}
    \caption{Gradient Descent}\label{alg:gd}
    \begin{algorithmic}[1]
        \REQUIRE $\{x^k\}_{k=1}^N $ trajectories of the Process observed in the interval  $[0, T]$ in times $0=t_0<t_1\ldots,t_n=T$ and $\tilde\theta_{10}$ as first estimation of $\theta_1$
        \STATE Calculate $\tilde\sigma=\frac{1}{N}\sum_{k=1}^N \frac{  \langle\hat{x}^k,\hat{x}^k\rangle_t }{\langle\hat{x}^k\rangle_t^2  }$
        \STATE Initialize $\Theta_{2} = \Theta_1 \, -  \nabla u(\Theta_1 ),$ where $\Theta_1 =\begin{bmatrix}\tilde\theta_{10} \\ \tilde\sigma\end{bmatrix}$.
        \WHILE{$m\leq m_{\max}$}
            \STATE Calculate $\Theta_{k+1} = \Theta_k \, - \gamma_{k} \nabla u(\Theta_k ),$ where $\gamma_{k}={\frac {\left|\left(\Theta_{k}-\Theta_{k-1}\right)^{T}\left[\nabla u(\Theta_{k})-\nabla u(\Theta_{k-1})\right]\right|}{\left\|\nabla  u(\Theta_{k})-\nabla u(\Theta_{k-1})\right\|^{2}}}$
        \ENDWHILE
        \RETURN $\Theta_{m_{\max}}$
    \end{algorithmic}
\end{algorithm}

\begin{algorithm}
    \caption{Gradient Descent for OU process}\label{alg:gdou}
    \begin{algorithmic}[1]
        \REQUIRE $\{x^k\}_{k=1}^N $ trajectories of the Process observed in the interval  $[0, T]$ in times $0=t_0<t_1\ldots,t_n=T$ and $\tilde\theta_{10}$ as first estimation of $\theta_1$
        \STATE Calculate $\tilde\sigma=\frac{1}{N}\sum_{k=1}^N \frac{  \langle\hat{x}^k,\hat{x}^k\rangle_t }{\langle\hat{x}^k\rangle_t^2  }$
        \STATE Initialize $\Theta_{2} = \Theta_1 \, -  \nabla u(\Theta_1 ),$ where $\Theta_1 =\begin{bmatrix}\tilde\theta_{10} \end{bmatrix}$.
        \WHILE{$m\leq m_{\max}$}
            \STATE Calculate $\Theta_{k+1} = \Theta_k \, - \gamma_{k} \nabla u(\Theta_k ),$ where $\gamma_{k}={\frac {\left|\left(\Theta_{k}-\Theta_{k-1}\right)^{T}\left[\nabla u(\Theta_{k})-\nabla u(\Theta_{k-1})\right]\right|}{\left\|\nabla  u(\Theta_{k})-\nabla u(\Theta_{k-1})\right\|^{2}}}$
        \ENDWHILE
        \RETURN $\Theta_{m_{\max}}$
    \end{algorithmic}
\end{algorithm}

 \section{Numerical experiments}\label{sec:NumExp}

In this section, we provide a numerical simulation of SDEs  using its Wiener-chaos expansion as was described in previous sections. Indeed, solving the system given by Equations \eqref{xm-equation} , and \eqref{xm-equation2}, we fixed the coefficients for the Wiener-chaos expansion of the stochastic processes $X$.

An algorithm to solve the system given in \eqref{xm-equation2} is presented in the appendix.

\subsection{A toy example}

Set the Hilbert space $\mathcal{H}=L^2(0,T)$ and
consider the Ornstein-Uhlenbeck (OU) process given by the SDE 
\begin{align}\label{OU-SDE}
dx(t) &= -a x(t) dt + \sigma dB(t),\quad\mbox{with initial condition }
x(0)= x^0, 
\end{align}
We have its integral representation 
\begin{align}\label{OU-integral}
x(t) &= x^0 -a \int_0^t x(s) ds + \sigma \int_0^t  dB(s). 
\end{align}

Observe that in this case $\theta_1=a$ and $\theta_2=\sigma$. Furthermore, $$
f(x:\theta_1)= -\theta_1 x \qquad\mbox{and } \qquad \sigma(x;\theta_2)=\theta_2
$$

With suitable assumptions we have that $x\in L^2(\Omega, \mathcal{F},\Pb)$, this implies that we can write
$$
x(t)= \sum_{\alpha \in \mathcal{J}} X_\alpha(t) \xi_\alpha, \qquad \mbox{with} \, X_\alpha(t)= \E(x(t)\, \xi_\alpha),
$$
thus, using \eqref{def-delta} we get
\begin{align}\label{propagator_OU}
    X_\alpha(t)
        &= \E\Big[\Big(x^0  -a \int_0^t x(s) ds 
            + \sigma \int_0^t  dB(s) \Big)\xi_\alpha\Big] \nonumber\\   
        &=  x^0\I_{|\alpha|=0}   -a \int_0^t  \E\Big(x(s) ds \xi_\alpha \Big) 
            + \sigma \E\Big[ \Big(\int_0^t  dB(s)\Big) \xi_\alpha\Big] \nonumber\\
        &= x^0\I_{|\alpha|=0}  - a \int_0^t  X_\alpha(s) ds 
            + \sigma \E\Big[ \big\langle \I_{[0,t]}(\cdot), D\xi_{\alpha}
            \big\rangle_{\mathcal{H}} \Big] \nonumber\\
        &= x^0\I_{|\alpha|=0}  - a \int_0^t  X_\alpha(s) ds 
            + \sigma \int_0^t \sum_{i,j=1}^\infty \sqrt{\alpha_{i,j}} \,\, 
                \E \big( \xi_{\alpha^-(i,j)} \big) ds\nonumber\\
        &=x^0\I_{|\alpha|=0}  - a \int_0^t  X_\alpha(s) ds 
            + \sigma t \sum_{i,j=1}^\infty \sqrt{\alpha_{i,j}}\,\, \I_{|\alpha^-(i,j)|=0}.
\end{align}

From this expression, we now calculate the propagator for the following three cases
$|\alpha|=0$, $|\alpha|=1$ and $|\alpha|>1$.

For $|\alpha|=0$ denote the propagator as $ X_0= X_{|\alpha|=0}$, observe that in
this case the indicator function $\I_{|\alpha^-(i,j)|=0}$ is equal to zero, then
from \eqref{propagator_OU} we have the ordinary differential equation

\begin{align*}
\frac{dX_0(t)}{dt}
&= - a   X_0(t),
\end{align*}
with initial condition $x^0$. This means that the solution $X_0(t)$ is

\begin{align}\label{m0_OU_prev}
 X_0(t)
&= x^0 \exp(- at).
\end{align}

For the case $|\alpha|=1$ denote the propagator as   $ X_1= X_{|\alpha|=1}$. 

Then, from \eqref{propagator_OU} and given that $|\alpha|=1$ then we have  $\sqrt{\alpha_{i}}=1 $ for some $i$, thus we can write for such $i$

\begin{align}\label{m1_OUdiff}
    \frac{dX_1(t)}{dt}
        &= - a X_1(t)  + \sigma  \sum_{i=1}^\infty \sqrt{\alpha_{i}}\,\,
        e_i(t)  \I_{\{i=1\} } \,\, \I_{\{|\alpha^-(i)|=0\}}\nonumber\\
        &= - a   X_1(t) + \sigma e_i(t) 
\end{align}
where  the ODE has zero initial condition.
Thus, we have that 

\begin{align}\label{m1_OU}
 X_{1_i}(t) =  \sigma  e^{-at} \int_0^t e^{as} e_i(s) ds.
\end{align}

For the case $|\alpha|>1$ we have the ODE with zero initial condition 

\begin{align}\label{m1_OU_alphagrather1}
\frac{dX_\alpha(t)}{dt}
&= - a   X_\alpha(t), 
\end{align}
with the trivial solution $X_\alpha(t) \equiv 0$.

This implies that the WCE for the OU process could be written as
\begin{align}\label{OU_WCE_approx}
x(t) = \sum_{\alpha \in \mathcal{J}} X_\alpha(t) \xi_\alpha = X_0(t) + \sum_{k =1}^\infty  X_{1_k}(t) \xi_{\alpha_k}\quad \mbox{with }\quad
 \xi_{\alpha_k} :=  \int_0^T e_k(s) dB(s)
 \end{align} 

\subsection{SGD for the OU process}
For the  OU process we will use the function 

\begin{align}\label{u_1_ou}
u(\Theta)&=  \Bigg(  \int_0^T  \Big[  X_{\mathbf{0}}(t;\Theta)  - \frac{1}{N}\sum_{k=1}^N \hat{x}^k(t)\Big]^2  dt \Bigg) 
\end{align}
to estimate the drift parameter.

\subsubsection{OU Process Results}

\begin{center}
\begin{tabular}{ |c|c|c| } 
\hline
 Parameters & Initial condition & Final Estimation \\
\hline
$\alpha=1.7 ,\sigma=0.15$ & $\tilde\alpha_0=2 ,\tilde\sigma=0.1331$ & $\tilde\alpha_n= 1.71562$ \\ 
 & $\tilde\alpha_0=0.5,\tilde\sigma=0.1331$ & $\tilde\alpha_n=1.66387$ \\ 
 & $\tilde\alpha_0=1.01,\tilde\sigma=0.1331$ & $\tilde\alpha_n=1.71266$ \\
\hline
 $\alpha=0.29 ,\sigma=0.1$ & $\tilde\alpha_0=2 ,\tilde\sigma=0.08801$ & $\tilde\alpha_n= 0.289461$ \\ 
 & $\tilde\alpha_0=0.5,\tilde\sigma=0.08801$ & $\tilde\alpha_n=0.289451$ \\ 
 & $\tilde\alpha_0=0.37,\tilde\sigma=0.08801$ & $\tilde\alpha_n=0.289462$ \\
\hline
$\alpha=1.26 ,\sigma=0.076$ & $\tilde\alpha_0=1.55 ,\tilde\sigma=0.06833$ & $\tilde\alpha_n= 1.23858$ \\ 
 & $\tilde\alpha_0=0.8,\tilde\sigma=0.06833$ & $\tilde\alpha_n=1.23354$ \\ 
 & $\tilde\alpha_0=0.45,\tilde\sigma=0.06833$ & $\tilde\alpha_n=1.2315$ \\
 \hline
\end{tabular}
\end{center}

\section{Applications to some SDEs}\label{sec:AppExamSDEs}

\subsection{Geometric Brownian motion}

Consider the SDE 
    \begin{equation}\label{GBME}
        dX_t = \alpha X_t dt + \sigma X_t dW_t.
    \end{equation}
with $X_0 = x_0$. The process solution of this equation is called the Geometric Brownian Motion (GBM) (see \cite{oksendal2013stochastic} for a review of this process). This process serves as the canonical model for continuous-time stochastic processes where the variable is constrained to be strictly positive, most notably in the modeling of asset prices within mathematical finance. It was famously adopted by \cite{Samuelson1965} and later underpinned the Black-Scholes option pricing framework \cite{Black1973}.

We have the unknown parameters $\alpha$ and $\sigma$, and we want to estimate them. We employ the Wiener chaos method utilizing a truncation of the WCE with $L=1,000$ and  $p=8$.

Following \cite{delgado2024simulating}  the propagator for this SDE is calculated as follows. For the case when $|m|=0$, we  denote by $X_0$ the solution of the propagator which solves the differential 
    equation
    \begin{align}\label{xm-equation0}
        X_0(t) 
            &= x_0 + a \int_0^t  X_0(s) ds, \quad \mbox{which is given by  } \, X_0(t)=x_0\exp(a t).
    \end{align}
    For the case $|m| \ge 1$ and using the vectors from Table 5 in \cite{delgado2024simulating}, $X_{m}^i$ satisfies
    \begin{align}
        X_{m}^i(t) 
            &= a \int_0^t  X_{m}^i(s) ds+ \sigma\sum_{j} 
            \sqrt{m_j^i} \int_0^t  e_j(s)\, X_{m^-(j)}^i(s)\, ds,
    \end{align}
    where $m^{i-}(j)$ is the $i$-th vector with the $j$-th entry minus $1$\footnote{see 
    the definition of  $m^{-}(j)$  in \eqref{m-i}}.
      We should numerically solve this system of coupled ODEs,  in increasing order of $|m|$.

\subsection{Stochastic logistic equation}\label{Sec:SLDE}
The classical logistic growth model, originally proposed by Verhulst, describes the dynamics of a population $N(t)$ constrained by a carrying capacity $K$. For any positive initial condition $P(0) > 0$, the population monotonically approaches the equilibrium state, $K$, ensuring persistence. However, biological systems are inevitably subject to environmental fluctuations—such as temperature variations or resource availability—that render the growth rate time-dependent and stochastic. To model this, the parameter $r$ is typically perturbed by a white noise term, $r \to r + \sigma \dot{B}(t)$, where $\sigma$ denotes the noise intensity and $\dot B(t)$ is Gaussian white noise. This transformation leads to the \textit{Stochastic Logistic Differential Equation} \eqref{sto-log} which is interpreted in the It\^o sense:

\begin{equation}\label{sto-log}
  \left.\begin{aligned} dP(t)&=   r\, P(t)\Big[1-P(t)\Big] dt + \sigma P(t) dB(t),\quad t>0\\
 P(0)&= p_0.
 \end{aligned}
 \right\}
 \end{equation}

where $r$ represents the intrinsic growth rate.  In addition, the diffusion term $\sigma P(t) dB(t)$ is multiplicative (linear in $P(t)$), reflecting that environmental noise has a greater absolute impact on larger populations while vanishing when the population is zero (an absorbing state). 
We seek to estimate the parameters $r$ and $\sigma$ using the proposed method.
\\
As before, we calculate the propagator following \cite{delgado2024simulating}.  We denote by $P_0$ the solution for the case $|m|=0$. Thus,  $P_0$ solves the differential equation
    \begin{align}\label{xm-log0}
        P_0(t) 
            &= p_0 + r \int_0^t  P_0(s)\big[1-P_0(s)\big] ds, 
    \end{align}
    which has as a solution
    \begin{align}\label{xm-log0-sol}
        P_0(t) 
            &=\frac{p_0}{1+\tfrac{1-p_0}{p_0} e^{-r t}}. 
    \end{align}
    
    We now consider the case where $|m| = 1$, and we will use the vectors from Table 5 in \cite{delgado2024simulating}. Here we will apply a truncation as follows. For the non-linear logistic drift, the exact Galerkin projection of the quadratic term $P_t^2$ onto the $j$-th chaos mode introduces the third-order structure tensor $c_{ikj} = \mathbb{E}[\xi_i \xi_k \xi_j]$. While $c_{ikj}$ vanishes exactly when the sum of the polynomial degrees is odd, it yields non-zero contributions for even-order combinations. However, the products of higher-order purely stochastic coefficients ($P_i P_k$) are typically negligible compared to the mean-field cross-terms ($2P_1 P_j$). To maintain computational tractability and avoid the curse of dimensionality associated with the 3D tensor, we employ a standard first-order truncation for the non-linear interactions, approximating the drift of $dP_j$ as $\approx \alpha P_j (1 - 2P_0)$. This linearized approximation effectively captures the macro-scale biological saturation while preserving the efficiency of the SGD algorithm.
 Thus $P_{m}^i$ satisfies
    
    \begin{align}\label{xm-log1}
        P_{m}^i(t) 
            &= r \int_0^t  P_{m}^i(s) \big[1- 2P_0(s) \big] ds+ \sigma\sum_{j} 
            \int_0^t e_j(s)\, P_{m^{i-}(j)}^i(s)\ ds,
    \end{align}
    
    where $m^{i-}(j)$ is the $i$-th vector with the $j$-th entry minus $1$.
    
    Finally, for the case with $|m| > 1$ and using the vectors from Table 5 in \cite{delgado2024simulating}, 
    $P_{m}^i$ satisfies
    \begin{align}\label{xm-log2}
        P_{m}^i(t) 
            &= r \int_0^t  P_{m}^i(s) \big[1- 2P_{0}(s) \big]  ds 
                + \sigma\sum_{j} \sqrt{m_j^i} 
            \int_0^t e_j(s)\, P_{m^-(j)}^i(s)\, ds,
    \end{align}
    where $m_j^i$ denotes the $j$-th entry of the $i$-th vector.
    
    We observe, as in the precedent example, that the functions  $P_{m}^i(t)$ with 
    $|m|\ge 1$ will be solved numerically using expressions \eqref{xm-log1} and 
    \eqref{xm-log2} in increasing order.

\subsection{Numerical examples}

We present a concise overview of the numerical findings for each example discussed in the previous section.

\subsubsection{Geometric Results}
\begin{center}
\begin{tabular}{ |c|c|c| } 
\hline
 Parameters & Initial condition & Final Estimation \\
\hline
 $\alpha=0.35 ,\sigma=0.17$ &$\tilde\alpha_0=0.4 ,\tilde\sigma=0.151048 $ & $\tilde\alpha_n=0.364751 ,\tilde\sigma_n=0.168444$ \\ 
 & $\tilde\alpha_0=0.2 ,\tilde\sigma=0.151048$ & $\tilde\alpha_n=0.364814 ,\tilde\sigma_n=0.156101$ \\ 
 & $\tilde\alpha_0=0.11 ,\tilde\sigma=0.151048$ & $\tilde\alpha_n=0.364836 ,\tilde\sigma_n=0.156445$ \\  
\hline
 $\alpha=0.63 ,\sigma=0.06$ &$\tilde\alpha_0=0.4 ,\tilde\sigma=0.056324 $ & $\tilde\alpha_n=0.638756 ,\tilde\sigma_n=0.0560588$ \\ 
 & $\tilde\alpha_0=0.2,\tilde\sigma=0.056324$ & $\tilde\alpha_n=0.641096 ,\tilde\sigma_n=0.0559717$ \\ 
 & $\tilde\alpha_0=0.8 ,\tilde\sigma=0.056324$ & $\tilde\alpha_n=0.638752 ,\tilde\sigma_n=0.055874$ \\  
\hline
$\alpha=0.9 ,\sigma=0.11$ &$\tilde\alpha_0=0.4 ,\tilde\sigma=0.10435 $ & $\tilde\alpha_n=0.917715 ,\tilde\sigma_n=0.0956571$ \\ 
 & $\tilde\alpha_0=0.25,\tilde\sigma=0.10435$ & $\tilde\alpha_n=0.987138 ,\tilde\sigma_n=0.0826281$ \\    
 & $\tilde\alpha_0=0.55 ,\tilde\sigma=0.10435$ & $\tilde\alpha_n=0.914632 ,\tilde\sigma_n=0.0972363$ \\   
\hline
\end{tabular}
\end{center}
\subsubsection{Logistic Results}

\begin{center}
\begin{tabular}{ |c|c|c| } 
\hline
 Parameters & Initial condition & Final Estimation \\
\hline
$\alpha=0.68 ,\sigma=0.09$ &$\tilde\alpha_0=0.55,\tilde\sigma=0.0810549  $ & $\tilde\alpha_n=0.673395 ,\tilde\sigma_n=0.0832877$ \\ 
 & $\tilde\alpha_0=0.95,\tilde\sigma=0.0810549 $ & $\tilde\alpha_n=0.675159 ,\tilde\sigma_n=0.0840031$ \\ 
 & $\tilde\alpha_0=0.12 ,\tilde\sigma=0.0810549 $ & $\tilde\alpha_n=0.674867 ,\tilde\sigma_n= 0.0898092 $ \\
\hline
 $\alpha=0.3 ,\sigma=0.05$ &$\tilde\alpha_0=1.01 ,\tilde\sigma=0.0443604  $ & $\tilde\alpha_n=0.319553 ,\tilde\sigma_n=0.0447231$ \\ 
 & $\tilde\alpha_0=0.1 ,\tilde\sigma=0.0443604 $ & $\tilde\alpha_n=0.302488 ,\tilde\sigma_n=0.096654$ \\ 
 & $\tilde\alpha_0=0.5 ,\tilde\sigma=0.0443604 $ & $\tilde\alpha_n=0.302418 ,\tilde\sigma_n= 0.0646823 $ \\ 
\hline
$\alpha=1.2 ,\sigma=0.2$ &$\tilde\alpha_0=0.57 ,\tilde\sigma=  0.179743  $ & $\tilde\alpha_n=1.21095 ,\tilde\sigma_n=0.256058$ \\ 
 & $\tilde\alpha_0=0.67 ,\tilde\sigma=  0.179743 $ & $\tilde\alpha_n=1.21411 ,\tilde\sigma_n=0.243503$ \\ 
 & $\tilde\alpha_0=0.95 ,\tilde\sigma=  0.179743 $ & $\tilde\alpha_n=1.21081 ,\tilde\sigma_n= 0.235354 $ \\ 
\hline
\end{tabular}
\end{center}

\begin{figure}[h]
\caption{Convergence analysis of the Gradient Descent algorithm for estimating the drift parameter $\alpha$ in the Geometric Brownian Motion and the OU process. The horizontal line indicates the ground truth value ($\alpha = 0.9$ for the GMB and $\alpha = 1.7$ for the OU). Distinct markers represent the evolution of the parameter estimate starting from different 5 initial guesses. As shown, all trajectories converge rapidly toward the true value within approximately 6 iterations, demonstrating the robustness and stability of the Wiener Chaos-based optimization framework.}
\centering
\includegraphics[scale=0.35]{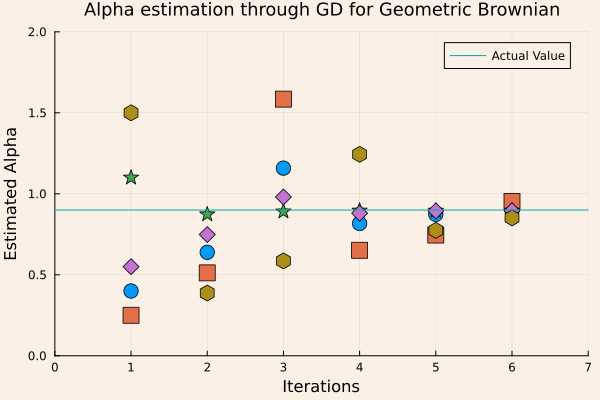}
\includegraphics[scale=0.35]{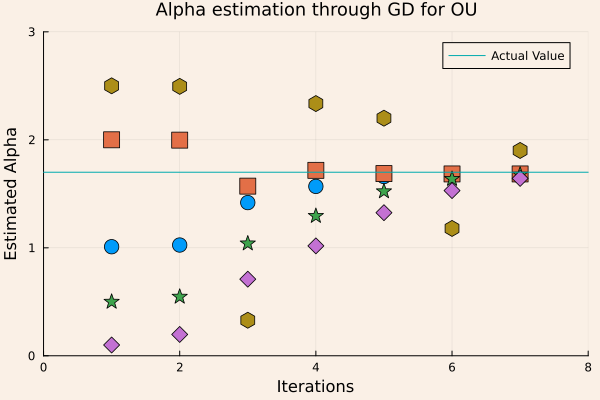}
\end{figure}

\section{Application to Empirical Data}\label{real-data-app}

In this section, we demonstrate the practical applicability and versatility of our parameter estimation framework by utilizing two distinct real-world datasets. First, we validate the method on financial market data using a canonical linear model, the Geometric Brownian Motion (GBM). Subsequently, we apply the algorithm to characterize individual biological growth, employing the non-linear stochastic logistic differential equation. This dual application highlights the algorithm's robustness across different domains, scaling, and underlying stochastic dynamics.

\subsection{Empirical Validation: S\&P 500 Index}

To assess the practical applicability and robust performance of the proposed stochastic gradient descent (SGD) approach coupled with the Wiener Chaos Expansion (WCE), we initially applied the methodology to empirical financial market data. The dataset selected consists of daily adjusted closing prices for the S\&P 500 index ($\hat{}$GSPC), representing a proxy for the broad US equity market.

Based on quantitative finance standards, the underlying dynamics were modeled using a Geometric Brownian Motion (GBM) specification. The observed period spans a full calendar year, from January 1, 2017, to December 31, 2017. This interval was selected to provide a sufficiently noisy real-world scenario to test the algorithm's learning capabilities, yet remain within a standard stable GBM regime.

Consistent with the procedure used for synthetic data, the volatility parameter ($\sigma$) was pre-estimated analytically via standard Maximum Likelihood Estimation (MLE) of the log-returns. Subsequently, the WCE-SGD algorithm was deployed to learn the drift parameter ($\mu$) by minimizing the functional error between the expansion propagator and the empirical trajectory.

The visual results of this validation are presented in Figure \ref{fig:sp500_wce}. As illustrated, the methodology yields an accurate reconstruction of the market dynamics. The empirical trajectory of the S\&P 500 index is remarkably contained within the 95\% confidence bands generated by the simulations based on the WCE-learned parameters. Furthermore, the numerical convergence of $\mu$ showed excellent agreement with the benchmark MLE value ($\hat{\mu} = 0.1589636$ with the WCE, and $\hat{\mu}_{MLE} = 0.1725922$). Consequently, the method is considered to function with high precision on contrasted real-world datasets, successfully capturing the stochastic and deterministic components of the underlying linear process.

\begin{figure}[htbp]
    \centering
    \includegraphics[width=0.8\textwidth]{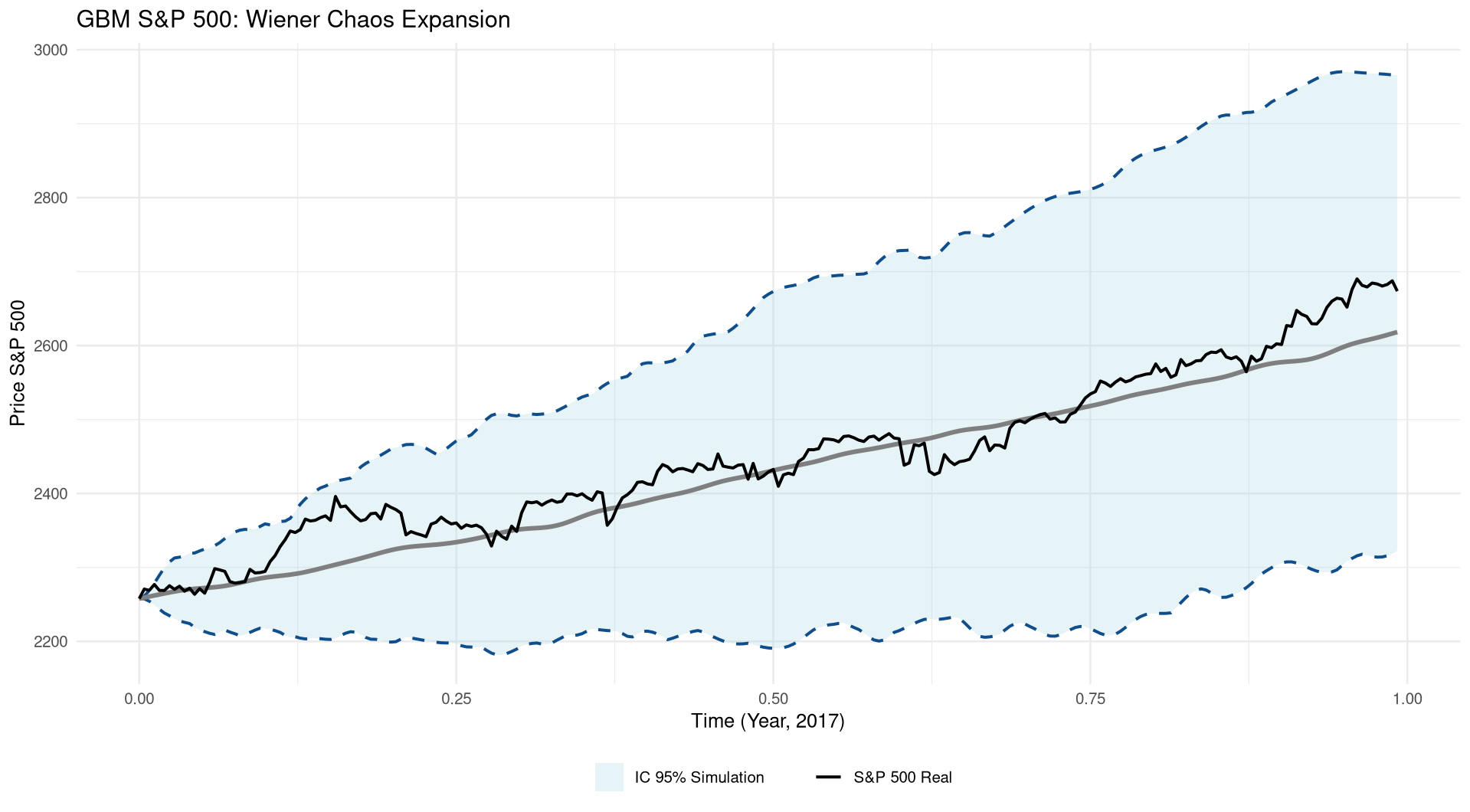}
    \caption{\textbf{GBM Parameter Learning Validation on S\&P 500 Index Data (2017).} Comparison between the daily adjusted closing prices (solid black line) and stochastic simulations based on parameters learned via the proposed WCE-SGD method ($\hat{\mu} \approx 0.1590$, $\hat{\sigma} \approx 0.0674$). The solid gray line represents the mean of 100 simulations, while the shaded blue area indicates the 95\% empirical confidence interval. The close fit demonstrates the successful application of the methodology to noisy empirical financial data.}
    \label{fig:sp500_wce}
\end{figure}

\subsection{Empirical Validation: Microbial Growth Dynamics}

Having validated the approach on a linear model, we now extend the application to a highly non-linear system where multi-trajectory data is available.

\subsubsection{Data Acquisition and Scientific Relevance}
The biological dataset, integrated into the \texttt{growthcurver} R package,\cite{sprouffske2016growthcurver, R-growthcurver}, comprises high-resolution microbial growth curves obtained via automated spectrophotometry. The raw data were recorded using a 96-well microtiter plate reader, which measures the Optical Density at 600 nm ($OD_{600}$) at regular intervals over a 24-hour period. This experimental setup allows for the simultaneous monitoring of multiple replicates, capturing the complete life cycle of the microbial population, including the lag, exponential, and stationary phases.

The primary utility of this dataset is to provide a robust framework for the computational modeling of population dynamics. By applying non-linear regression models (such as the logistic growth equation), researchers can derive critical kinetic parameters, namely the carrying capacity ($K$), the intrinsic growth rate ($\alpha$), and the initial population size ($P_0$).

\subsubsection{Parameter Estimation via WCE-SGD}
Modeling this biological phenomenon requires addressing the intrinsic multiplicative noise and the non-linear saturation limit ($K$). To isolate the stochastic dynamics from the spatial scale, we implemented a hybrid estimation strategy.

\begin{figure}[htbp]
    \centering
    \includegraphics[width=0.8\textwidth]{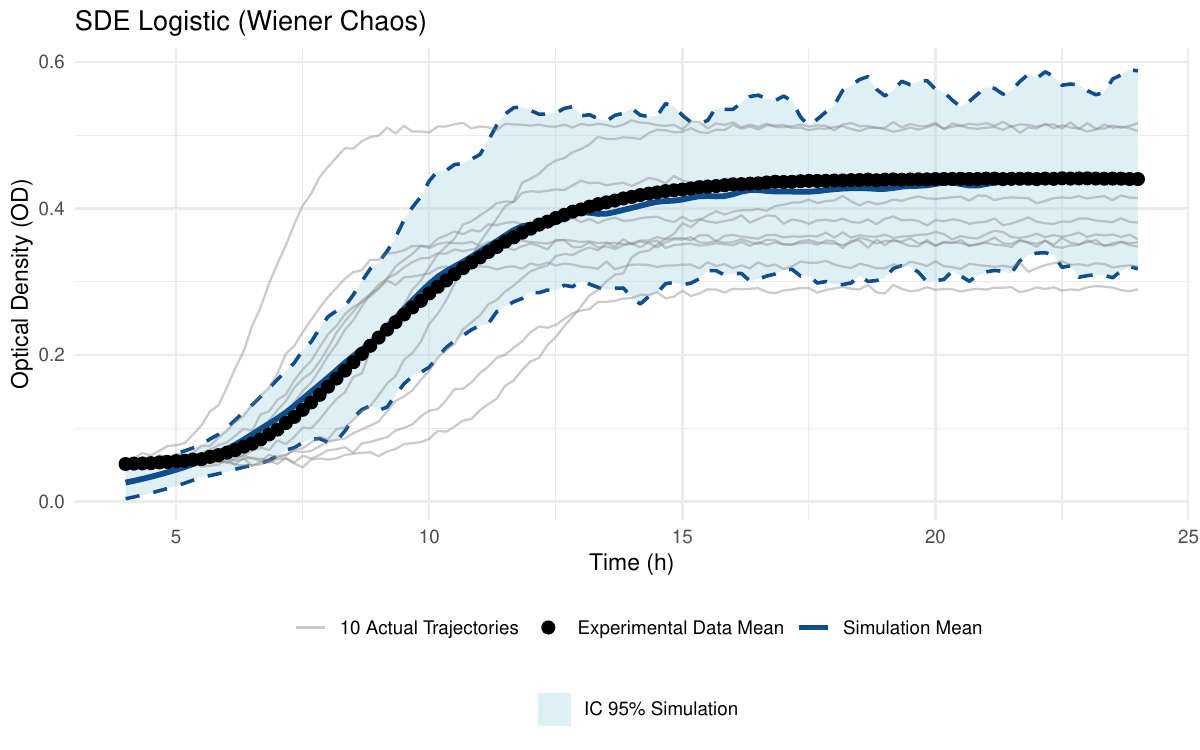}
    \caption{\textbf{Stochastic Logistic Parameter Learning on Microbial Growth Data.} Comparison between empirical optical density trajectories (gray lines) from a 96-well microtiter plate and stochastic simulations driven by parameters learned via the WCE-SGD framework. The data was pre-standardized to estimate the carrying capacity $K$. The methodology accurately captures both the asymmetrical, non-linear drift of the exponential phase and the inherent biological noise, encapsulating the individual trajectories within the 95\% confidence bands (shaded area).}
    \label{fig:logistic_wce}
\end{figure}

First, a deterministic standardization was performed. The cross-well mean trajectory of the raw optical density data was computed, and a standard Levenberg-Marquardt Non-Linear Least Squares (NLS) algorithm was utilized to estimate the empirical carrying capacity ($\hat{K}$) and the initial population state. The entire dataset was then normalized by dividing it by $\hat{K}$, mapping the dynamic state variable $X_t$ strictly to the domain $[0,1]$, which mathematically stabilizes the stochastic logistic equation $dX_t = \alpha X_t(1-X_t)dt + \sigma X_t dW_t$.

Second, the volatility parameter ($\sigma$) was evaluated analytically by computing the mean quadratic variation across all individual normalized well trajectories. This captures the true experimental stochasticity. Finally, the WCE-SGD algorithm was deployed on the normalized mean trajectory to estimate the intrinsic growth rate ($\alpha$). To prevent numerical instability during optimization, the Barzilai-Borwein step size was constrained using dynamic dampening (gradient clipping), ensuring smooth convergence to the global minimum despite the non-linear topology of the objective functional.

The outcomes, presented in Figure \ref{fig:logistic_wce}, confirm that the estimated parameters successfully reconstruct the experimental behavior. By re-scaling the simulations back to the original biological limits ($K$), the model accurately reflects both the deterministic macroscopic trend (the sigmoidal drift) and the microscopic biological variance observed across the wells.

\section{Concluding Remarks}

In this work, we have developed and validated a novel computational framework for the inverse problem of parameter estimation in Stochastic Differential Equations (SDEs). The core innovation of our approach lies in the integration of the Wiener Chaos Expansion (WCE) with Stochastic Gradient Descent (SGD), which effectively transforms the challenge of stochastic parameter learning into a deterministic optimization problem.

By projecting the solution of the SDE onto the Wiener chaos space, we decomposed the stochastic trajectory into a system of deterministic ordinary differential equations, referred to as the \textit{propagator}. This spectral decomposition allowed us to construct a robust objective function that simultaneously targets the deterministic trend (via the zeroth-order chaos mode) and the stochastic fluctuations (via higher-order modes and quadratic variation). Consequently, we were able to decouple the estimation of drift and diffusion parameters, addressing a common numerical difficulty in stochastic inverse problems.

The numerical experiments conducted on a diverse set of synthetic benchmarks (including the Ornstein-Uhlenbeck process, Geometric Brownian Motion, and the Stochastic Logistic Equation) demonstrated the algorithm's versatility and convergence properties. In all cases, the proposed method successfully recovered the underlying parameters with high accuracy, even in the presence of multiplicative noise and non-linearities.

Crucially, the extension of this framework to empirical data underscores its practical utility for modeling complex real-world systems. By accurately identifying the drift and volatility components of the S\&P 500 index within a real-market context, the WCE-SGD approach proved its efficacy in capturing financial market stochasticity. Furthermore, its application to high-resolution microbial growth data demonstrated the method's capacity to handle non-linear biological saturation limits and intrinsic experimental variance using the stochastic logistic model. These dual validations confirm that the methodology represents a significant theoretical advancement in spectral decomposition and offers a computationally efficient alternative for parameter learning. While its performance is demonstrated here through specific examples in ecology, biology, and finance, the underlying framework is fundamentally versatile and applicable across a broad spectrum of scientific disciplines where stochastic modeling is required.

Future research will prioritize the extension of this methodology to a broader range of high-frequency financial instruments, further refining the coupling between the WCE and SGD under volatile market conditions. Additionally, we aim to develop a robust numerical implementation of the Stochastic Partial Differential Equation (SPDE) framework introduced in Section \ref{SPDE-WCE}. 

Another key avenue of exploration involves the application of this approach to higher-dimensional systems and coupled SDEs. In such contexts, the inherent scalability of the SGD-WCE framework is expected to offer substantial computational advantages and improved tractability over traditional Maximum Likelihood Estimation (MLE) or computationally intensive Bayesian inference methods.

\section{Software and Codes}
The numerical experiments have been implemented in 
on an average laptop (ASUS Vivobook X512DA with
Processor AMD Ryzen 5 3500U and Radeon Graphics, 1200 Mhz, 4 Cores, 8 Logical Processors). The code can be
found under the following link:\\
\hyperlink{https://github.com/JJulianPavon/GradientDescentSDE}{https://github.com/JJulianPavon/GradientDescentSDE}

\appendix

\section{Background on Malliavin Calculus and Wiener-chaos expansion}\label{WienerChaos}

For the sake of completeness, this section reviews the fundamental concepts of Malliavin calculus and the Wiener chaos expansion utilized in this work. For a comprehensive treatment, we refer the reader to the classical book of  \cite{nualart2006malliavin}.

\subsection{Gaussian Probability Space and Wiener Chaos}
Let $\mathfrak{H} = L^2([0,T])$ be a separable Hilbert space with inner product $\langle \cdot, \cdot \rangle_{\mathfrak{H}}$ and norm $\|\cdot\|_{\mathfrak{H}}$. We consider an isonormal Gaussian process $W = \{W(h) : h \in \mathfrak{H}\}$ defined on a complete probability space $(\Omega, \mathcal{F}, \mathbb{P})$. This implies that $W$ is a centered Gaussian family of random variables satisfying the isometry property:
\begin{equation}
    \mathbb{E}[W(h)W(g)] = \langle h, g \rangle_{\mathfrak{H}}, \quad \forall h, g \in \mathfrak{H}.
\end{equation}
We assume that $\mathcal{F}$ is the $\sigma$-algebra generated by $W$. In our specific setting, $W(h)$ is the Wiener integral $W(h) = \int_0^T h(t) dW_t$, where $W_t$ is a standard Brownian motion.

To construct an orthogonal basis for $L^2(\Omega, \mathcal{F}, \mathbb{P})$, we employ the Hermite polynomials defined by:
\begin{equation}
    H_n(x) = \frac{(-1)^n}{\sqrt{n!}} e^{x^2/2} \frac{d^n}{dx^n} \left( e^{-x^2/2} \right), \quad n \geq 0,
\end{equation}
with $H_0(x)=1$. Let $\{e_i\}_{i \geq 1}$ be an orthonormal basis of $\mathfrak{H}$. We define the set of multi-indices of finite length as:
\begin{equation}
    \mathcal{J} = \{ \alpha = (\alpha_1, \alpha_2, \dots) : \alpha_i \in \mathbb{N} \cup \{0\}, \ |\alpha| := \sum_{i=1}^\infty \alpha_i < \infty \}.
\end{equation}
For any multi-index $\alpha \in \mathcal{J}$, we define the Fourier-Hermite functional $\xi_\alpha$ as:
\begin{equation} \label{eq:xi_alpha}
    \xi_\alpha = \prod_{i=1}^\infty H_{\alpha_i}(W(e_i)).
\end{equation}
It is a classical result (see \cite[Prop 1.1.1]{nualart2006malliavin}) that the family $\{ \xi_\alpha : \alpha \in \mathcal{J} \}$ forms a complete orthonormal basis for $L^2(\Omega, \mathcal{F}, \mathbb{P})$. Consequently, any square-integrable random variable $F \in L^2(\Omega)$ admits a unique \textit{Wiener Chaos Expansion}:
\begin{equation} \label{eq:WCE}
    F = \sum_{\alpha \in \mathcal{J}} F_\alpha \xi_\alpha, \quad \text{where} \quad F_\alpha = \mathbb{E}[F \xi_\alpha].
\end{equation}
The subspace spanned by $\{\xi_\alpha : |\alpha|=n\}$ is called the $n$-th Wiener chaos $\mathcal{H}_n$, and we have the decomposition $L^2(\Omega) = \bigoplus_{n=0}^\infty \mathcal{H}_n$.

\begin{remark}\label{1st-2nd_moments}
Useful properties of this decomposition include:
\begin{enumerate}
    \item \textbf{Expectation:} $\mathbb{E}[F] = F_0$ (since $\mathbb{E}[\xi_\alpha] = 0$ for $|\alpha| > 0$).
    \item \textbf{Isometry:} $\mathbb{E}[F^2] = \sum_{\alpha \in \mathcal{J}} F_\alpha^2$.
    \item \textbf{Variance:} $\text{Var}(F) = \sum_{\{\alpha : |\alpha| \geq 1\}} F_\alpha^2$.
\end{enumerate}
\end{remark}

\subsection{The Malliavin Derivative}
Let $\mathcal{S}$ denote the set of smooth random variables of the form:
\begin{equation}
    F = f(W(h_1), \dots, W(h_n)),
\end{equation}
where $f \in C_p^\infty(\mathbb{R}^n)$ (smooth functions with polynomial growth) and $h_i \in \mathfrak{H}$. The \textit{Malliavin derivative} $DF$ is defined as the $\mathfrak{H}$-valued random variable:
\begin{equation}
    DF = \sum_{i=1}^n \partial_i f(W(h_1), \dots, W(h_n)) h_i.
\end{equation}
We denote by $D_t F$ the realization of $DF$ at time $t$. The operator $D$ is closable from $L^p(\Omega)$ to $L^p(\Omega; \mathfrak{H})$. We define the Sobolev space $\mathbb{D}^{1,2}$ as the closure of $\mathcal{S}$ under the norm:
\begin{equation}
    \|F\|_{1,2}^2 = \mathbb{E}[|F|^2] + \mathbb{E}[\|DF\|_{\mathfrak{H}}^2].
\end{equation}
Crucially, the derivative of a basis element $\xi_\alpha$ acts as a lowering operator:
\begin{equation} \label{eq:D_xi}
    D_t \xi_\alpha = \sum_{i=1}^\infty \sqrt{\alpha_i} e_i(t) \xi_{\alpha^-(i)},
\end{equation}
where $\alpha^-(i)$ is the multi-index with the $i$-th component reduced by one: 
\begin{equation}\label{m-i}
\alpha^-(i):= (\alpha_1,\ldots,\alpha_{i-1},\alpha_i-1,\alpha_{i+1},\ldots)\quad\mbox{ if } \alpha_{i}\ge 1,
\end{equation}
 and is undefined otherwise.

\subsection{The Skorohod Integral}
The divergence operator $\delta$, also known as the \textit{Skorohod integral}, is the adjoint of the Malliavin derivative $D$. Its domain, $\text{Dom}(\delta) \subset L^2(\Omega; \mathfrak{H})$, consists of processes $u$ such that:
 $   |\mathbb{E}[\langle DF, u \rangle_{\mathfrak{H}}]| \leq C \|F\|_{L^2(\Omega)}, \quad \forall F \in \mathbb{D}^{1,2}$.
For $u \in \text{Dom}(\delta)$, the random variable $\delta(u)$ is characterized by the duality relationship:
\begin{equation} \label{def-delta}
    \mathbb{E}[F \delta(u)] = \mathbb{E}[\langle DF, u \rangle_{\mathfrak{H}}], \quad \forall F \in \mathbb{D}^{1,2}.
\end{equation}
A fundamental result (see \cite{nualart2006malliavin}) connects the Skorohod integral to the classical It\^o integral: if a process $u \in L^2(\Omega \times [0,T])$ is adapted to the filtration generated by the Brownian motion, then $u \in \text{Dom}(\delta)$ and $\delta(u)$ coincides with the It\^o integral:
\begin{equation}
    \delta(u) = \int_0^T u(t) dW_t.
\end{equation}
This connection allows us to derive the Wiener chaos expansion for solutions of SDEs. Specifically, using the duality property \eqref{def-delta} and the derivative formula \eqref{eq:D_xi}, the projection of a stochastic integral onto the chaos mode $\alpha$ is given by:
\begin{equation} \label{eq:chaos_prop}
    \mathbb{E}\left[ \left( \int_0^t u(s) dW_s \right) \xi_\alpha \right] = \mathbb{E} \int_0^t u(s) D_s \xi_\alpha ds = \sum_{i=1}^\infty \sqrt{\alpha_i} \int_0^t \mathbb{E}[u(s) \xi_{\alpha^-(i)}] e_i(s) ds.
\end{equation}
This identity is the cornerstone for deriving the deterministic propagator system used in this paper.

\bibliographystyle{cas-model2-names}
\bibliography{references}

\end{document}